\newcommand{\com}[1]{{\color{red}{Comment: #1}}}
\newcommand{\com}[1]{}
\newtheorem{theorem}{Theorem}
\newtheorem{lemma}{Lemma}
\newtheorem{corollary}{Corollary}
\newtheorem{assumption}{Assumption}
\begin{document}

\begin{frontmatter}

\title{EventGraD: Event-Triggered Communication in Parallel Machine Learning}

\author{Soumyadip Ghosh\corref{cor1}}\ead{sghosh2@nd.edu}
\cortext[cor1]{Corresponding author}
\author{Bernardo Aquino}\ead{bcruz2@nd.edu}
\author{Vijay Gupta}\ead{vgupta2@nd.edu}
\address{Department of Electrical Engineering, University of Notre Dame, USA }

\begin{abstract}

Communication in parallel systems imposes significant overhead which often turns out to be a bottleneck in parallel machine learning. To relieve some of this overhead, in this paper, we present EventGraD - an algorithm with event-triggered communication for stochastic gradient descent in parallel machine learning. The main idea of this algorithm is to modify the requirement of communication at every iteration in standard implementations of stochastic gradient descent in parallel machine learning to communicating only when necessary at certain iterations. We provide theoretical analysis of convergence of our proposed algorithm. We also implement the proposed algorithm for data-parallel training of a popular residual neural network used for training the CIFAR-10 dataset and show that EventGraD can reduce the communication load by up to 60\% while retaining the same level of accuracy. In addition, EventGraD can be combined with other approaches such as Top-K sparsification to decrease communication further while maintaining accuracy.

\end{abstract}

\begin{keyword}
Machine Learning, Event-Triggered Communication, Parallel Computing
\end{keyword}

\end{frontmatter}

\section{Introduction}
\label{sec:intro}

Artificial intelligence in general, and machine learning in particular, is revolutionizing many aspects of our life~\cite{gorriz2020artificial}. Machine learning (ML) algorithms in various applications have achieved significant benefits through training of a large number of parameters using huge data sets. 
Focus has now shifted to ensure that these algorithms can be executed for complex problems in a reasonable amount of time. Initial speed-ups in ML algorithms were due to better algorithm design (e.g. using mini-batches) or hardware (e.g. introduction of graphics processing units(GPUs)). However, to stay relevant, machine learning must continue to scale up in the size and complexity of the application problem. The challenge is both in the large number of parameters that need to be trained and the consequent large amount of data that needs to be processed.

An obvious answer is to go from one processing element -- that may have neither the memory nor the computational capability needed for machine learning implementations to solve complex problems -- to multiple processing elements (sometimes referred to as parallel or distributed implementations)~\cite{upadhyaya2013parallel,ben2019demystifying}.  For instance, there has been a lot of recent interest in machine learning using artificial neural networks on large-scale clusters such as supercomputers~\cite{young2017evolving,yin2019strategies}. Both data-parallel (in which the dataset is divided into multiple processors, with each processor having a copy of the entire neural network model) and model-parallel (in which the neural network model is divided among multiple processors, with each processor having access to the entire dataset) architectures have been considered~\cite{bekkerman2011scaling}. For some applications such as federated learning which involves edge devices such as smartphones or smart speakers, distributed training is often the only choice due to data privacy concerns~\cite{boulemtafes2020review}.  

One of the biggest challenges of training in any parallel or distributed environment is the overhead associated with communication between different processors or devices. In high performance computing clusters, communication of messages over networks often takes a lot of time, consumes significant power and can lead to network congestion~\cite{bergman2008,lucas2014doe,jana2014power}. Specifically for parallel machine learning, during training, the processors need to exchange the weights and biases with each other before moving to the next training iteration. For example, in a data parallel architecture, the weights and biases among the different processors are averaged with each other (either directly or through a central parameter server) before executing the next training iteration. Such an exchange usually happens by message passing at the end of every iteration. As the number of processing elements increases, the issue of such communication being a major bottleneck in these implementations is known widely~\cite{zhang2013communication,alistarh2016qsgd,lin2018deep,seide20141-bit}.

Consequently there has been a lot of research aimed at reducing communication in parallel machine learning~\cite{zhang2013communication,alistarh2016qsgd,lin2018deep,seide20141-bit,liu2021consensus}. This rich stream of work has suggested various ways of reducing the size or number of messages as means of alleviating the communication overhead. In this paper, we propose a novel algorithm to reduce communication in parallel machine learning involving artificial neural networks. Specifically, we utilize the idea of event-triggered communication from control theory to design a class of communication-avoiding machine learning algorithms. In this class of algorithms, communication among the processing elements occurs intermittently and only on an as-needed basis. This leads to a significant reduction in the number of messages communicated among the processing elements. Note that algorithms to reduce communication have been proposed in other applications of parallel computing as well, such as parallel numerical simulation of partial differential equations~\cite{chronopoulos1989,hoemmen2010,ghosh2018event}. 

The core idea of our algorithm is to exchange the neural network parameters (the weights and biases) only when a certain criterion related to the utility of the information being communicated is satisfied, i.e., in an event-triggered fashion. We present both theoretical analysis and experimental demonstration of this algorithm. Experimentally, we show that the algorithm can yield the same accuracy as standard implementations with $60\%$ lesser number of messages transmitted among processors using a popular residual neural network on the CIFAR-10 dataset. Our implementation is open-source and available at~\cite{code_url}. A reduction in the number of messages implies a reduction in both the time and energy overhead of communication and can prevent congestion in the network. Theoretically, we show that our algorithm has a bound on the convergence rate (in terms of number of iterations) of the order $\mathcal{O}\left(\frac{1}{\sqrt{Kn}} + \frac{G(K-1)}{\sqrt{K}} + \frac{G_{1/2}^2(K-1)}{\sqrt{K}} \right)$ where $K$ is number of iterations, $n$ is number of processors, and $G(K)$ and $G_{1/2}(K)$ are terms related to a bound on the threshold of event-triggered communication. In particular, if the bound on the threshold is chosen to be a sequence that decreases geometrically as a function of the iteration number, the bound on the convergence rate becomes of the order $\mathcal{O}\left(\frac{1}{\sqrt{Kn}} + \frac{1}{\sqrt{K}}\right)$ which is similar to the asymptotic convergence rate of parallel stochastic gradient descent in general. An earlier version of this algorithm without theoretical results was experimentally demonstrated as a proof of concept on the smaller MNIST dataset in~\cite{ghosh2020eventgrad}. In contrast, this paper contains a comprehensive theoretical treatment of the algorithm with additional experiments on the CIFAR-10 dataset. In our previous work~\cite{ghosh2018event}, we have considered event-triggered communication for a different domain of parallel numerical partial differential equation solvers and highlighted the implementation challenges similar to this paper. However we considered a fixed threshold without any mathematical treatment in that work unlike the adaptive threshold along with theoretical convergence results studied in this paper.

While we focus on data-parallel stochastic gradient descent in parallel machine learning for theoretical analysis and experimental verification of the algorithm in this paper, the idea of event-triggered communication can be applied to model-parallel and hybrid configurations and can be extended to other training algorithms as well such as Adam, RMSProp, etc. Similarly, event-triggered communication can also be used in federated learning where communication can have a more severe overhead due to the geographical separation between the devices involved in training such as smartphones. 

The paper is organised as follows. Section~\ref{sec:related} surveys related work and Section~\ref{sec:probform} introduces the necessary background. The proposed algorithm is introduced in Section~\ref{sec:idea} with theoretical analysis in Section~\ref{sec:analysis} and implementation details in Section~\ref{sec:impl}. Section~\ref{sec:results} contains the experimental results followed by conclusion in Section~\ref{sec:conc}. For notational convenience, we denote the abbreviation PE to be a processing element that signifies one core of a processor.

\section{Related Work}
\label{sec:related}

In this section, we review some communication-efficient strategies of distributed training of neural networks from literature. We primarily focus on parallel stochastic gradient descent~\cite{zinkevich2010parallelized, bottou2018optimization}. We also review some works on event-triggered communication and then highlight our specific contributions on using event-triggered communication for parallel training of neural networks.

\textbf{Parameter Server - } A popular approach for parallelization of stochastic gradient descent is the centralized parameter server approach where multiple workers compute the gradients in their assigned sub-dataset and send them to a central parameter server. The parameter server updates the neural network model parameters using the individual gradients and sends the updated parameters back to the workers, who then move on to the next iteration. The original approach results in a synchronized algorithm. This requirement of synchronization was relaxed by the Hogwild algorithm~\cite{recht2011hogwild} where the worker PEs can send gradients to the parameter server asynchronously without any lock step. Elastic Averaging SGD proposed by~\cite{zhang2015deep} reduces communication by introducing the notion of an elastic period of communication between the workers and the parameter server. Other approaches have also been proposed in the literature~\cite{lian2015asynchronous}.
There have been studies to reduce communication with the parameter server in the context of federated learning as well~\cite{sattler2019robust,chen2019communication,xu2020ternary}. However, the parameter server approach often suffers from poor scalability due to the dependence on a central node, which can become a bottleneck.

\textbf{AllReduce - } Another popular approach for parallelization of stochastic gradient descent does not consider a centralized parameter server. Rather, every PE maintains a copy of the model and the PEs average the parameters of the model by communicating in an all-to-all fashion among themselves using a reduction mechanism commonly known as AllReduce~\cite{gropp1999using}. Since such all-to-all communication incurs a lot of overhead, a lot of research has focused on reducing this overhead by using optimized variants. The authors in~\cite{seide20141-bit} have proposed one-bit quantization where each gradient update is quantized to 1-bit, resulting in a reduction of the data volume that needs to be communicated. Threshold quantization was developed in~\cite{strom2015scalable} where only those gradient updates that are greater than a specified threshold are transmitted after encoding them with a fixed value. A hybrid approach combining both 1-bit and threshold quantization was given in the adaptive quantization proposed in~\cite{dryden2016communication}. Deep Gradient Compression in~\cite{lin2018deep} compresses the size of gradients and accumulates the quantization error and momentum to maintain accuracy. Several approaches have been proposed to minimize communication by reducing the precision of gradients, e.g., using half precision (16-bit) for training~\cite{gupta2015deep} and mixed precision~\cite{micikevicius2018mixed}. Sparsified methods that communicate only the top-k most significant values have been proposed by~\cite{alistarh2018convergence,renggli2019sparcml}.
Combining the two methods of quantization and sparsification is presented in~\cite{basu2019qsparse}.

\textbf{Reduction with neighbors - } Instead of averaging the parameters using all-to-all communication among all the PEs via AllReduce, another approach has been proposed where the averaging is done only with the neighboring PEs in the topology in which the PEs are connected~\cite{yuan2016convergence}. This approach uses ideas from consensus algorithms which is now widely studied in many different communities~\cite{olfati2007consensus}. We would like to point out that there is confusion in literature about the term ``decentralized" - some works call the AllReduce approach involving averaging among all PEs as decentralized because of the absence of a central parameter server~\cite{li2018pipe}, while others call the approach that involves averaging with just the neighboring PEs as decentralized because it does not require any central operation on all the PEs in the topology~\cite{yuan2016convergence,lian2017can}. We adopt the latter usage and call the algorithms that require averaging with just the neighboring PEs as decentralized. While it might seem that such a scheme will converge slower than a centralized approach due to the delayed dissemination of information to all the nodes, the authors in~\cite{lian2017can} showed that the convergence rate is similar in order to the centralized approach after the same number of iterations provided the number of iterations is sufficiently large. While~\cite{lian2017can} considers that the neighbors of a particular PE remain fixed across iterations, there are interesting gossip algorithms~\cite{blot2016gossip,jin2016scale,daily2018gossipgrad} that choose neighbors randomly and exchange information. More recently, the authors in~\cite{liu2021consensus} propose an error-compensated communication compression mechanism in this context using bit-clipping that reduce communication costs.

\textbf{Event-Triggered Communication} - Event-Triggered Communication has been proposed as a mechanism for reducing communication in networked control systems~\cite{lemmon2010event,dimarogonas2012distributed}. Methods employing event-triggered communication in consensus algorithms have been variously proposed, e.g.,~\cite{wang2019distributed,chen2017event,nowzari2019event}. Closely related to consensus is the problem of distributed optimization where there have been various event-triggered approaches proposed~\cite{zhao2018distributed,lu2017distributed, richert2016distributed}. Of particular relevance is~\cite{zhang2016adaptive} which suggests an event-triggered communication scheme with an adaptive threshold of communication that is dependent on the state of the last trigger instant in a continuous time control system.

\textbf{Our Contribution - }The decentralized parallel stochastic gradient descent in~\cite{lian2017can} still considers that communication of parameters with neighbor PEs happens at every iteration. Since the values of the parameters may not change significantly in every iteration, communication at every iteration may not be necessary. Thus the main idea behind the algorithms presented in this paper is to communicate these parameters in events only when their values change by a certain threshold. We consider the scenario of data-parallel training of a neural network in a high performance computing (HPC) cluster where there are fixed neighboring PEs for every PE and show that communicating in events with neighboring PEs reduces the number of messages passed in the network. Decreasing the message count decreases the overall data to be communicated and as pointed out in literature~\cite{zhang2013communication,alistarh2016qsgd,lin2018deep,seide20141-bit,liu2021consensus}, reducing the data to be communicated reduces the overhead associated with communication. More concretely, the contributions of our work are:
\begin{itemize}
\item We propose an event-triggered communication algorithm where the neural network parameters, i.e., the weights and biases, are communicated only when their norm changes by some threshold. The threshold is chosen in an adaptive manner based on the rate of change of the parameters.  
\item We derive an expression for a bound on the convergence rate of event-triggered communication based on a generic bound on the adaptive threshold.
\item We provide an open-source high performance computing (HPC) implementation of our algorithm using PyTorch and Message Passing Interface (MPI) in C++. We also highlight implementation challenges of this algorithm, particularly the need of advanced features such as one-sided communication, also called remote memory access and the requirement of the newer PyTorch C++ frontend over its traditional Python frontend. We believe that it is not possible to implement event-triggered communication without remote memory access in any computer network as elaborated later in Section~\ref{sec:impl}. Our implementation is open-source and available at~\cite{code_url}.  
\end{itemize}

The paper closest to ours seems to be~\cite{george2019distributed} where the authors considered a federated learning scenario and
proposed an event-triggered communication scheme for the model parameters based on thresholds that are dependent on the learning rate and showed reduction in communication for distributed training. 
As compared to that work, we consider an adaptive threshold rather than selecting the same threshold across all parameters. In particular, the threshold is adaptive to the local slope of a parameter and thus it can adjust according to the parameter's evolution which will depend on factors such as the type of the parameter, the neural network model and the dataset. Hence the adaptive threshold makes our algorithm robust to different neural network models and different datasets. Our theoretical results are based on a generic bound on the threshold unlike~\cite{george2019distributed} which provides a bound considering a certain form of threshold dependent on the learning rate. Further, we highlight the implementation challenges of event-triggered communication in an HPC environment which is different than the federated learning setting considered in~\cite{george2019distributed} that usually involves wireless communication.
 
\section{Problem Formulation}
\label{sec:probform}

This section lays the mathematical preliminaries for the main algorithm introduced in the next section. We consider a decentralized communication graph $(V, W)$ where $V$ denotes the set of $n$ PEs and $W \in \mathbb{R}^{n \times n}$ is the symmetric doubly stochastic adjacency matrix. $W_{ii}$ corresponds to the weight of the state of the $i$-th PE while $W_{ij}$ corresponds to the weight of the state of the $j$-th PE on the state of the $i$-th PE. We assume that $W$ represents a ring topology that remains fixed throughout iterations. Now the objective of data-parallel training of any neural network can be expressed as 
\begin{align}
\min_{x\in \mathbb{R}^n}{f(x)=\frac{1}{n}\sum_{i=1}^{n}\underbrace{{\mathbb{E}_{\xi \sim \mathbb{D}}F_i(x;\xi)}}_{:=f_i(x)}}
\end{align}
where $\mathbb{D}$ is the sampling distribution, considered to be the same in every PE. Further, the neural network in every PE is considered to have $N$ parameters. These parameters are the weights and biases in the neural network model.

For mathematical formulation, let us define the concatenation of all local parameters $X_k$, random samples $\xi_k$, stochastic gradients $\partial F(X_k; \xi_k)$ and expected gradients $\partial f(X_k)$ as : 
\begin{align*}
&X_k := \left[ x_{k,1} \quad \cdots \quad  x_{k,n} \right] \in \mathbb{R}^{N \times n}, \quad \xi_k:=\left[ \xi_{k,1} \quad \cdots \quad  \xi_{k,n} \right]^\top \in \mathbb{R}^{n}, \\
&\partial F\left( X_k;\xi_k \right) := \left[ \nabla F_1\left( x_{k,1};\xi_{k,1} \right) \quad \nabla F_2\left( x_{k,2};\xi_{k,2} \right) \quad \cdots \quad  \nabla F_n\left( x_{k,n};\xi_{k,n} \right) \right] \in \mathbb{R}^{N \times n}, \\
&\partial f\left( X_k \right) := \left[ \nabla f_1\left( x_{k,1} \right) \quad \nabla f_2\left( x_{k,2} \right) \quad \cdots \quad  \nabla f_n\left( x_{k,n} \right) \right] \in \mathbb{R}^{N \times n}.
\end{align*}
The algorithm is said to converge to a $e$-approximate solution if
\begin{align*}
K^{-1}\left( \sum_{k=0}^{K-1}{\mathbb{E}\left \Vert \nabla f \left( \frac{X_k\textbf{1}_n}{n} \right) \right \Vert^2} \right)\leq e.
\end{align*}
Now the training algorithm for the decentralized stochastic gradient descent mentioned in~\cite{lian2017can}
can be expressed as:
\begin{align}
\label{eqn:dist_sgd_neigh}
X_{k+1} = X_{k}W-\gamma \partial F\left( X_k;\xi_k \right),
\end{align}
where $\gamma$ is the step size or learning rate. From~\eqref{eqn:dist_sgd_neigh}, it is clear that values from neighbor PEs are needed to calculate the values in a particular PE. Thus the parameters of the neural network, i.e., the weights and biases, are communicated between the neighbor PEs after every iteration. For details on how to choose $W$ optimally, the reader is referred to~\cite{boyd2004fastest}. Usually the values of $W$ are taken to be $\frac{1}{\mathcal{N}_i + 1}$ where $\mathcal{N}_i$ are the number of neighbors of the $i$-th PE. This means that the parameters at the $i$-th PE are averaged with that of its neighbors after every iteration. For the ring topology that we assume, $\mathcal{N}_i = 2$ for all $i$. After training concludes, the models in all the PEs are usually averaged to produce one model which is then evaluated on the test dataset. This algorithm from~\cite{lian2017can}, named D-PSGD in that paper, is stated in pseudo code in Algorithm~\ref{alg:regular_ml}. Since communication between neighboring PEs happen \textit{regularly} after every iteration, we refer to this algorithm as the one with regular communication. We modify this algorithm to include event-triggered communication as proposed in the next section.

\begin{algorithm}
\renewcommand\thealgorithm{\Alph{algorithm}}
\caption{: Regular Communication in Data Parallel Machine Learning}
\label{alg:regular_ml}
\begin{algorithmic}
\For{$k = 0, 1, 2, \dots K-1$}
\State Randomly sample from dataset in $i$-th PE
\State Compute the local stochastic gradient
\State Communicate parameters to neighbors
\State Update parameters using~\eqref{eqn:dist_sgd_neigh}
\EndFor
\State Obtain averaged model from all PEs
\end{algorithmic}
\end{algorithm}

\section{Proposed Algorithm: EventGraD}
\label{sec:idea}

In the decentralized algorithm in~\eqref{eqn:dist_sgd_neigh}, the parameters in a PE are exchanged with neighbors in every iteration of the training. This might be a waste of resources since the parameters might not differ a lot in every iteration. Therefore it is possible to relax this requirement of communication with neighbors at every iteration of training. This is the main idea of our algorithm where communication happens \textit{only when necessary} in \textit{events}. 

Our algorithm works as follows - Every PE tracks the changes in the parameters of its model. When the norm of a particular parameter in a PE has changed by some threshold, it is sent to the neighboring PEs. At other iterations, that particular parameter is not sent to the neighbors and the neighbors continue updating their own model using the last received version of that parameter.

\begin{figure}[h]
\centering
\includegraphics[scale=0.4]{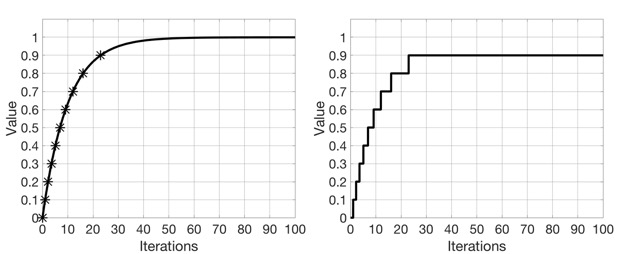}
\caption{Illustration of change in norm of parameters over iterations (taken from~\cite{ghosh2018event}). The left plot shows the norm of the parameter over iterations at the sender. The right plot shows the norm of that corresponding parameter used at the receiver.}
\label{fig:eventdiag}
\end{figure}

Fig~\ref{fig:eventdiag} illustrates this phenomenon. As an example, the left plot shows the evolution of the norm of a parameter over training iterations. When this norm changes by more than a threshold (0.1 in Fig~\ref{fig:eventdiag}) from the norm of the previously communicated values, an event for communication is triggered as marked by an asterisk. The first event of communication is forced to take place at iteration $k=0$ for convenience. The right plot shows the corresponding values that the receiving PE uses when averaging its parameter with the parameter from this corresponding sending PE. 

For mathematically describing the algorithm, let us first define the vector of previously communicated values $\hat{X}_k$ as
\begin{align}
\hat{X}_k := \left[ \hat{x}_{k,1} \quad \cdots \quad  \hat{x}_{k,n} \right] \in \mathbb{R}^{N \times n}.
\end{align}
Note that each $\hat{x}_{k,i}$ is a vector of the norm of $N$ parameters, i.e.,
\begin{align}
\hat{x}_{k,i} = \left[ \hat{x}_{k,i,1} \quad \dots \quad \hat{x}_{k,i,N} \right]^\top \in \mathbb{R}^{N}.
\end{align}

Now the event-triggered condition can be expressed as

\begin{align*}
\hat{x}_{k+1,i,I} =
\begin{cases}
x_{k+1,i,I} & \text{if } \Vert \hat{x}_{k,i,I}-x_{k+1,i,I} \Vert \geq \delta_{k,i,I}\\
\hat{x}_{k,i,I} & \text{if } \Vert \hat{x}_{k,i,I}-x_{k+1,i,I} \Vert < \delta_{k,i,I},\\
\end{cases} 
\end{align*}
where $\delta_{k,i,I}$ is the threshold for the $I$-th parameter in the $i$-th PE at $k$-th iteration. Consequently, the training algorithm gets modified from~\eqref{eqn:dist_sgd_neigh} to
\begin{align}
\label{eqn:dist_sgd_event}
X_{k+1} = \hat{X}_k W - \gamma \partial F\left( \hat{X}_k;\xi_k \right),
\end{align}
which represents our algorithm with event-triggered communication. The pseudo code is specified in Algorithm~\ref{alg:event_ml}. 

\begin{algorithm}
\renewcommand\thealgorithm{\Alph{algorithm}}
\caption{: EventGraD - Event-Triggered Communication in Data Parallel SGD}
\label{alg:event_ml}
\begin{algorithmic}
\For{$k = 0, 1, 2, \dots K-1$}
\State Randomly sample from dataset in $i$-th PE
\State Compute the local stochastic gradient
\For{$I = 1, 2, \dots, N$}
\If {$\Vert \hat{x}_{k,i,I}-x_{k+1,i,I} \Vert \geq \delta_{k,i,I}$}
    \State Communicate parameter to neighbors
\EndIf
\EndFor
\State Update parameters using~\eqref{eqn:dist_sgd_event}
\EndFor
\State Obtain averaged model from all PEs
\end{algorithmic}
\end{algorithm}

Choosing the threshold $\delta_{k,i,I}$ is a design problem. The efficiency of this algorithm depends on selecting appropriate thresholds. The simplest option would be to choose the same value of threshold for all the parameters in all the PEs as was done in~\cite{george2019distributed}. However, selecting the appropriate value would involve a lot of trial and error. Further, when the neural network model changes, the process would have to be repeated all over again. More importantly, the parameters in a model and across different PEs would vary differently and selecting the same threshold for all of them is not desired. 
Instead, it is better to choose a dynamic threshold that is adaptive to the rate of change of the parameters. A metric that is indicative of the rate of change of a parameter is the local slope of the norm of the parameter. Thus we choose the threshold of a parameter based on the local slope of its norm. Whenever an event of communication is triggered, the slope is calculated between the current value and the last communicated value. This slope is multiplied by a horizon $h$ to calculate the threshold as illustrated in Fig~\ref{fig:slope_thres}. This threshold will be kept fixed until the next event is triggered, resulting in calculation of a new threshold. Thus we obtain:
\begin{align}
\label{eqn:thres}
\delta_{k,i,I} = \underbrace{\frac{\Vert \hat{x}_{k,i,I}-x_{k+1,i,I} \Vert}{k - \hat{k}}}_{Slope} \times h,
\end{align}
where $\hat{k}$ is the iteration corresponding to $\hat{x}_{k,i,I}$, i.e., when the last value was communicated.

The intuition behind making the threshold dependent on the slope is to ensure it is chosen according to the trend of evolution of the parameter. This helps on saving communication as much as possible while ensuring that communication does not stop, i.e., happens once in a while.
If a parameter is changing fast, that means that it will satisfy the criterion for communication soon - thus a high threshold (due to the high slope) can be suitable. However, if the parameter is changing slowly, there might a long period before the next communication happens which might slow down convergence of the overall algorithm. Hence the threshold is decreased (due to the low slope) to incentivize communication.

The horizon $h$ is a hyperparameter that is chosen by the user. Its purpose is to serve as a \textit{look-ahead} to calculate the next threshold. It might seem that $h$ requires tuning as well, thereby nullifying its advantages over the static threshold. However, the same value of $h$ can be chosen for the different parameters because the threshold is already modulated by the slope. If the neural network model is changed due to change in the depth, width or type of layers, the threshold will adjust accordingly. Choosing a different dataset where the data follows a different distribution is also likely to change the evolution of the neural network parameters which the adaptive threshold can capture. Thus the adaptive threshold selection mechanism plays a huge role in keeping our algorithm EventGraD portable as much as possible across multiple models and multiple datasets.

\begin{figure}[h]
\centering
\includegraphics[scale=0.38]{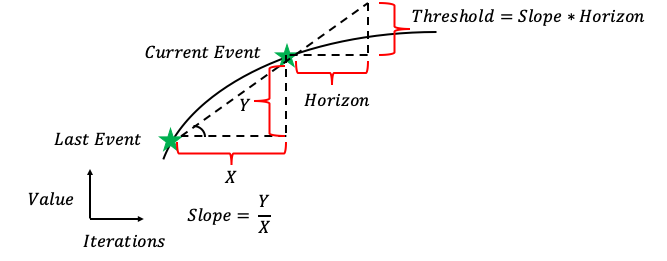}
\caption{Illustration of slope-based adaptive threshold. The right green star denotes the event of current communication while the left green star denotes the event of last communication. The slope is calculated between these two points which is then multiplied by the horizon to obtain the new adaptive threshold.}
\label{fig:slope_thres}
\end{figure}

\section{Analysis}
\label{sec:analysis}

The theoretical convergence properties of the proposed algorithm are studied in this section. Let us consider the error or difference between the last communicated state and the current state as:
\begin{align}
\label{eqn:error_state}
&\epsilon_{k,i,I} = \hat{x}_{k,i,I} - x_{k,i,I} , \\
\implies &\mathcal{E}_{k} = \hat{X}_{k} - X_{k}  ,
\end{align}
where $\mathcal{E}_k = \left[ \epsilon_{k,1} \quad \dots \quad \epsilon_{k,n} \right] \in \mathbb{R}^{N \times n}$. According to our algorithm, the error is bounded by the corresponding threshold as
\begin{align}
\Vert \epsilon_{k,i,I} \Vert \leq \delta_{k,i,I}.
\end{align}
Since $\delta_{k,i,I}$ is different for different $i$ and different $I$, considering the different values for any theoretical analysis seems intractable.
Rather we consider the following assumption:
\begin{assumption}
\label{assum:thres_bound}
The thresholds $\delta_{k,i,I}$ can be bounded by a function dependent on only $k$ as
\begin{align}
\Vert \delta_{k,i,I} \Vert^2 \leq g(k).
\end{align}
\end{assumption}

Assumption~\ref{assum:thres_bound} makes analysis of the convergence properties of the algorithm feasible by considering a bound on thresholds for all parameters in all PEs. Further, we consider the following assumptions that are usually used for analysis of SGD algorithms.

\begin{assumption}
\label{assum:sgd}
The following assumptions hold:

\begin{enumerate}
\item Lipschitz Gradient: All functions $f_i(.)$'s have $L$-Lipschitz Gradients.

\item Spectral Gap: Given the symmetric doubly stochastic matrix $W$, the value $\rho:= \left( \max\right\{\left|\lambda_2(W)\right|,\left|\lambda_n(W) \right|\left\}\right)$ satisfies $\rho<1$.

\item Bounded Variance: The variance of the  stochastic gradient
\begin{align*}
\mathbb{E}_{i \sim \mathcal{U}_{i([n])} }\mathbb{E}_{\xi \sim \mathbb{D}_i} \Vert \nabla F_i(x;\xi)- \nabla f(x) \Vert^2
\end{align*}
is bounded for any $x$ with $i$ sampled uniformly from $\{1,\dots,n\}$ and $\xi$ from the distribution $\mathbb{D}_i$. That is, there are constants $\sigma$ and $\varsigma$, such that:
\begin{align*}
\mathbb{E}_{\xi \sim \mathcal{D}_i} \Vert \nabla F_i(x;\xi)-\nabla f_i(x) \Vert^2\leq \sigma^2,\forall i, \forall x, \\
\mathbb{E}_{i \sim \mathcal{U}_{i([n])}} \Vert \nabla f_i(x)-\nabla f(x) \Vert^2\leq \varsigma^2,\forall x.
\end{align*}
\item We start from $X_0=0$ without loss of generality.
\end{enumerate}
\end{assumption}

Let
\begin{align}
&C_1 = \left(\frac{1-\gamma}{2}-\frac{72\gamma^3}{C_2\left(1-\sqrt{\rho}\right)^2}L^2\right), \;
C_2 = \left(1-\frac{36\gamma^2}{\left(1-\sqrt{\rho}\right)^2}nL^2\right) \nonumber \\
&G(K) = \sum_{k=0}^K g(k), \;
G_{1/2}(K) = \sum_{k=0}^K \sqrt{g(k)}
\end{align}
\begin{theorem}
\label{thm:conv}
Considering the assumptions, we obtain the following convergence rate for the algorithm
\begin{align}
\begin{split}
&\quad \frac{C_1}{K}\sum_{k=0}^{K-1}{\mathbb{E}\left\Vert\nabla f\left(\frac{X_{k}\textbf{1}_n}{n}\right)\right\Vert^2}+ \frac{\gamma-\gamma^2 L}{2K}\sum_{k=0}^{K-1}{\mathbb{E}\left\Vert \frac{\partial f(\hat{X}_k)\textbf{1}_n}{n}\right\Vert^2}\\
& \quad \leq \frac{f(0)-f^*}{K}
+ \frac{\gamma^2 L\sigma^2}{2n} \\
& \quad +
\left(12C_2^{-1}\gamma^3nL^2\left(2L^2+1\right)+\frac{3\gamma L^2+L+1}{2K}+\frac{72\gamma^3L^4}{KC_2\left(1-\sqrt{\rho}\right)^2}\right)G(K-1)\\
& \quad +C_2^{-1}\gamma\rho L^2G_{1/2}^2(K-1)+\frac{2n\gamma^3\sigma^2L^2}{C_2(1-\rho)}+\frac{18n\gamma^3\varsigma^2L^2}{C_2\left(1-\sqrt{\rho}\right)^2}\\
\end{split}
\end{align}
\end{theorem}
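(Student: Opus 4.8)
The plan is to follow the template of the decentralized SGD analysis in~\cite{lian2017can}: track the evolution of the network-averaged iterate $\bar{x}_k := X_k\textbf{1}_n/n$ while simultaneously controlling the consensus deviation of the individual PEs from this average, and inject the event-triggered error $\mathcal{E}_k = \hat{X}_k - X_k$ as an additional perturbation that is ultimately absorbed through Assumption~\ref{assum:thres_bound}. Throughout I would write $J := \textbf{1}_n\textbf{1}_n^\top/n$ for the averaging projector and use $\hat{X}_k = X_k + \mathcal{E}_k$ to separate the true state from the communicated state. Multiplying~\eqref{eqn:dist_sgd_event} by $\textbf{1}_n/n$ and using $W\textbf{1}_n = \textbf{1}_n$ (double stochasticity) gives
\begin{align*}
\bar{x}_{k+1} = \bar{x}_k + \frac{\mathcal{E}_k\textbf{1}_n}{n} - \gamma\,\frac{\partial F(\hat{X}_k;\xi_k)\textbf{1}_n}{n},
\end{align*}
so that relative to standard D-PSGD the only new term in the averaged dynamics is $\mathcal{E}_k\textbf{1}_n/n$.

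Next I would apply the $L$-smoothness descent inequality to $f(\bar{x}_{k+1})$, take the expectation over $\xi_k$ conditioned on the past so that $\partial F(\hat{X}_k;\xi_k)$ is replaced by its mean $\partial f(\hat{X}_k)$, and use the bounded-variance assumption (with the $1/n$ reduction from averaging $n$ independent stochastic gradients) to produce the $\gamma^2 L\sigma^2/(2n)$ term. The descent term $-\gamma\langle\nabla f(\bar{x}_k),\partial f(\hat{X}_k)\textbf{1}_n/n\rangle$ would be decomposed via the polarization identity $2\langle a,b\rangle=\|a\|^2+\|b\|^2-\|a-b\|^2$, yielding the two non-negative quantities $\|\nabla f(\bar{x}_k)\|^2$ and $\|\partial f(\hat{X}_k)\textbf{1}_n/n\|^2$ that are kept on the left-hand side (with coefficients $C_1$ and $(\gamma-\gamma^2L)/2$ respectively), together with a residual $\|\nabla f(\bar{x}_k)-\partial f(\hat{X}_k)\textbf{1}_n/n\|^2$ that $L$-Lipschitzness bounds by the consensus deviation $\|X_k(I-J)\|_F^2$ plus $\|\mathcal{E}_k\|_F^2$; the cross term involving $\mathcal{E}_k\textbf{1}_n/n$ would be handled with Young's inequality.

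The technical core is bounding the deviation $D_k := X_k(I-J)$. Since $(I-J)(W-J)=W-J$, the iteration stays self-contained in $D_k$, and unrolling with $D_0=X_0(I-J)=0$ gives
\begin{align*}
D_k = \sum_{j=0}^{k-1}(W-J)^{k-1-j}\big(\mathcal{E}_j(W-J)-\gamma\,\partial F(\hat{X}_j;\xi_j)(I-J)\big).
\end{align*}
Using $\|(W-J)^m\|\le\rho^{m}$ from the spectral-gap assumption, squaring, and splitting the resulting sum with Cauchy-Schwarz produces the geometric factors $(1-\sqrt{\rho})^{-1}$ and $(1-\sqrt{\rho})^{-2}$; the gradient pieces contribute the $\sigma^2$, $\varsigma^2$ and $\|\partial f(\hat{X}_k)\textbf{1}_n/n\|^2$ terms, while the $\mathcal{E}_j$ pieces contribute the $g$-dependent terms. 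Invoking Assumption~\ref{assum:thres_bound}, the bounds $\|\epsilon_{k,i,I}\|\le\delta_{k,i,I}$ and $\|\delta_{k,i,I}\|^2\le g(k)$ give $\|\mathcal{E}_k\|_F^2\lesssim g(k)$ after summing over the $N$ parameters and $n$ PEs, and $\|\mathcal{E}_k\|_F\lesssim\sqrt{g(k)}$.

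Finally I would substitute the deviation bound into the per-iteration descent inequality, sum over $k=0,\dots,K-1$, telescope $f(\bar{x}_0)-f(\bar{x}_K)\le f(0)-f^*$ (using $X_0=0$ so $\bar{x}_0=0$), and collect the $\gamma$-dependent coefficients into $C_1$ and $C_2$; the squared-error contributions sum to $\sum_k g(k)=G(K-1)$ while the cross-terms between errors at distinct iterations sum to $\big(\sum_k\sqrt{g(k)}\big)^2=G_{1/2}^2(K-1)$. The hard part will be exactly this coupling between the event-triggered error and the gossip dynamics: when the consensus deviation is squared one must control the double sum $\sum_{j,j'}\rho^{\,\cdot}\|\mathcal{E}_j\|_F\|\mathcal{E}_{j'}\|_F$, and it is precisely this product-of-errors structure — rather than a single geometrically weighted sum — that forces the $\rho$-weighted $G_{1/2}^2$ term instead of a $G$ term. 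The remaining bookkeeping obstacle is verifying that the coefficients close up consistently, in particular that the step size $\gamma$ is small enough to guarantee $C_2>0$ and hence $C_1>0$.
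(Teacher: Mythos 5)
Your proposal is correct and follows essentially the same route as the paper's proof: a Lipschitz descent step on the averaged iterate with the polarization identity isolating the $T_1$-type residual, a consensus-deviation bound unrolled to $X_0=0$ via the spectral gap, event-triggered errors absorbed through Assumption~\ref{assum:thres_bound} (with the cross-terms of $\Vert\mathcal{E}_j\Vert_F\Vert\mathcal{E}_{j'}\Vert_F$ correctly identified as the source of the $G_{1/2}^2$ term), and the self-bounded deviation inequality solved to produce $C_1$ and $C_2$. Your use of the projector $I-\textbf{1}_n\textbf{1}_n^\top/n$ is just a matrix-level repackaging of the paper's per-column bound $\Vert \textbf{1}_n/n - W^k e_i\Vert^2 \leq \rho^k$ (Lemma~\ref{lemma:rhok_bound}), and the only slip is notational: the powers of $W-J$ must multiply your $N\times n$ error and gradient matrices on the right, not the left.
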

\begin{proof}
Provided in appendix.
\end{proof}

Theorem~\ref{thm:conv} represents the convergence of the average of the models in all PEs. In order to obtain a closer result, we consider an appropriate learning rate and then state the following corollary:

Let
\begin{align}
C_3=\frac{\left(1-\sqrt{\rho}\right)^2\left(2L^2+1\right)}{6\rho L^2}, \; C_4=\frac{7L^2+L+1}{2}
\end{align}

\begin{corollary}
\label{corr:conv}
Under the same assumptions as in Theorem~\ref{thm:conv}, if we set $\gamma = \frac{1}{2\rho L^2\sqrt{K}+\sigma\sqrt{K/n}}$, we have the following convergence rate:
\begin{align*}
\begin{split}
\frac{1}{K}\sum_{k=0}^{K-1}{\mathbb{E}\left\Vert\nabla f\left(\frac{X_{k}\textbf{1}_n}{n}\right)\right\Vert^2} \leq 
\left(2(f(0)-f^*)+L\right)\left(\frac{1}{K}+\frac{1}{\sqrt{Kn}}\right) \\
+
\left(\frac{2C_3}{\sqrt{K}}+\frac{2C_4}{K}\right)G(K-1)+ 
\frac{2}{\sqrt{K}}G_{1/2}^2(K-1)\\
\end{split}
\end{align*}
if the total number of iterations $K$ is large enough, in particular,
\begin{align*}
&K \geq \frac{4n^3L^2}{\sigma^3\left(f(0)-f^*+L/2\right)}\left(\frac{\sigma^2}{\left(1-\rho\right)}+\frac{9\varsigma^2}{\left(1-\sqrt{\rho}\right)^2}\right), \text{and} \\
&K \geq \frac{72 L^2 n^2}{\sigma^2 (1 - \sqrt{\rho})^2}, \text{and}
\\
&K \geq \left( \frac{\sqrt{n}(L+1)}{2 \rho L^2 \sqrt{n} + \sigma} \right)^2
\end{align*}
\end{corollary}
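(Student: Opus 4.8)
The plan is to specialize the bound of Theorem~\ref{thm:conv} to the stated step size $\gamma = (2\rho L^2\sqrt{K}+\sigma\sqrt{K/n})^{-1}$ and then simplify using the three lower bounds on $K$. First I would discard the second term on the left-hand side. Writing $\gamma = (\sqrt{K}(2\rho L^2 + \sigma/\sqrt{n}))^{-1}$, the third condition on $K$ is exactly $\gamma \le 1/(L+1) < 1/L$, so $\gamma-\gamma^2 L = \gamma(1-\gamma L) > 0$ and the nonnegative term $\frac{\gamma-\gamma^2 L}{2K}\sum_k \mathbb{E}\|\partial f(\hat X_k)\mathbf{1}_n/n\|^2$ may be dropped. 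This leaves $\frac{C_1}{K}\sum_k \mathbb{E}\|\nabla f(X_k\mathbf{1}_n/n)\|^2$ alone on the left, and the target then follows by dividing through by $C_1$ and bounding $C_1^{-1}$ and $C_2^{-1}$ by $2$.

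The second condition $K \ge 72L^2 n^2/(\sigma^2(1-\sqrt{\rho})^2)$ is designed to give $C_2 \ge 1/2$: combined with the elementary estimate $\gamma^2 \le n/(\sigma^2 K)$ (keeping only the variance term in the denominator of $\gamma$) it yields $\gamma^2 \le (1-\sqrt{\rho})^2/(72nL^2)$, which is precisely $36\gamma^2 nL^2/(1-\sqrt{\rho})^2 \le 1/2$, i.e. $C_2^{-1}\le 2$. With $C_1^{-1},C_2^{-1}\le 2$ in hand, I would bound each of the six summands on the right of Theorem~\ref{thm:conv} using the two one-sided estimates $\gamma \le 1/(2\rho L^2\sqrt{K})$ and $\gamma \le \sqrt{n}/(\sigma\sqrt{K})$ together with the bound on $\gamma^2$. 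The terms $\frac{f(0)-f^*}{K}$ and $\frac{\gamma^2 L\sigma^2}{2n}$ collapse to $(2(f(0)-f^*)+L)/K$; the consensus/variance terms $\frac{2n\gamma^3\sigma^2 L^2}{C_2(1-\rho)}$ and $\frac{18n\gamma^3\varsigma^2 L^2}{C_2(1-\sqrt{\rho})^2}$ are $O(K^{-3/2})$, and the first condition on $K$ is exactly what forces their sum below $(2(f(0)-f^*)+L)/\sqrt{Kn}$; the $G_{1/2}^2(K-1)$ term reduces to $\frac{2}{\sqrt{K}}G_{1/2}^2(K-1)$ via $4\gamma\rho L^2 \le 2/\sqrt{K}$; and in the coefficient of $G(K-1)$ the piece $12C_2^{-1}\gamma^3 nL^2(2L^2+1)$ matches $2C_3/\sqrt{K}$ (using both $\gamma \le 1/(2\rho L^2\sqrt{K})$ and the bound on $\gamma^2$), the piece $\frac{3\gamma L^2+L+1}{2K}$ matches $2C_4/K$ since $3\gamma L^2 \le 7L^2$ for $\gamma$ small, and the remaining $O(K^{-5/2})$ piece is absorbed using the $\gamma^2$ bound again. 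Collecting these yields the displayed inequality.

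The main obstacle is controlling $C_1^{-1}$. Unlike $C_2$, the quantity $C_1 = \frac{1-\gamma}{2} - \frac{72\gamma^3 L^2}{C_2(1-\sqrt{\rho})^2}$ is strictly below $1/2$ for every $\gamma>0$, so the clean bound $C_1^{-1}\le 2$ is only attained in the limit $\gamma \to 0$; a fully rigorous argument must either carry $C_1^{-1}=2+O(\gamma)$ and verify that the induced $O(\gamma)$ corrections decay one factor of $\sqrt{K}$ faster than the retained terms (hence are reabsorbed for $K$ large), or tighten the constants. More broadly, the substantive work is bookkeeping: each of the three lower bounds on $K$ corresponds to forcing one family of faster-decaying remainders below a retained target term (the $O(K^{-3/2})$ consensus terms below the $K^{-1/2}$ and $(Kn)^{-1/2}$ rates via condition one; the $C_2$ and $\gamma^3$ estimates via condition two; and $\gamma\le 1/(L+1)$ via condition three), and the delicate part is checking that the constants align so the coefficients emerge as exactly $2C_3$, $2C_4$, and $2$ rather than merely $O(1)$.
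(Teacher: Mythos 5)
Your proposal is correct and follows essentially the same route as the paper's own proof: drop the $\partial f(\hat{X}_k)$ term on the left via $\gamma \le 1/(L+1)$, use $K \ge 72L^2n^2/(\sigma^2(1-\sqrt{\rho})^2)$ together with $\gamma^2 \le n/(\sigma^2 K)$ to get $C_2 \ge 1/2$, bound each right-hand term using $\gamma \le 1/(2\rho L^2\sqrt{K})$ and $\gamma \le \sqrt{n}/(\sigma\sqrt{K})$, and invoke the first condition on $K$ to push the $O(K^{-3/2})$ variance terms below $(f(0)-f^*+L/2)/\sqrt{Kn}$, exactly as you describe. The one obstacle you flag is genuine but is not resolved by the paper either: its proof simply asserts $C_1 \ge \frac{1-\gamma}{2}-\frac{144L^2\gamma^3}{(1-\sqrt{\rho})^2} \ge \frac{1}{2}$, which is strictly false for any $\gamma>0$, so your remark that a fully rigorous argument must either carry $C_1^{-1}=2+O(\gamma)$ and reabsorb the corrections, or slightly loosen the stated constants, points to a real gap in the published proof rather than in your own.
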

\begin{proof}
Provided in appendix.
\end{proof}

Corollary~\ref{corr:conv} shows that the bound on the convergence rate is dependent on the threshold related terms $G(K)$ and $G_{1/2}(K)$. When $K$ is large enough, the $\frac{1}{K}$ terms will decay faster than the $\frac{1}{\sqrt{K}}$ terms and therefore the convergence rate is of the order $\mathcal{O}\left(\frac{1}{\sqrt{Kn}} + \frac{G(K-1)}{\sqrt{K}} + \frac{G_{1/2}^2(K-1)}{\sqrt{K}}\right)$. 

Note that a threshold of $0$ reduces to the regular algorithm in~\cite{lian2017can}. Thus with $g(k) = 0$, we obtain $G(k)=0$ and $G_{1/2}(k)=0$ and hence the rate of convergence reduces to $\mathcal{O}(\frac{1}{K}+\frac{1}{\sqrt{Kn}})$ which is consistent with~\cite{lian2017can}. Now we provide a more concrete bound by choosing $g(k)$ according to a popular event-triggered threshold specified in~\cite{seyboth2013event}.

\begin{corollary}
\label{corr:geo_gk}
If $g(k)$ is chosen of the form $g(k) = \alpha \beta^k$ where $\alpha, \beta$ are appropriate constants and $0 < \beta < 1$, then the rate of convergence is of the order $\mathcal{O}\left(\frac{1}{\sqrt{Kn}} + \frac{1}{\sqrt{K}}\right)$.
\end{corollary}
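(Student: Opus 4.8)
The plan is to specialize the general bound of Corollary~\ref{corr:conv} to the geometric threshold $g(k)=\alpha\beta^k$ and show that the two threshold-dependent quantities $G(K-1)$ and $G_{1/2}^2(K-1)$ are bounded by constants independent of $K$. Once this is established, every term in Corollary~\ref{corr:conv} that carries a factor of $G(K-1)$ or $G_{1/2}^2(K-1)$ inherits only the decay of its accompanying $1/\sqrt{K}$ or $1/K$ prefactor, and the stated rate follows by collecting orders.

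First I would evaluate the ordinary sum. Recalling $G(K)=\sum_{k=0}^{K}g(k)$, the choice $g(k)=\alpha\beta^k$ gives a finite geometric series, so that $G(K-1)=\alpha(1-\beta^K)/(1-\beta)\le \alpha/(1-\beta)$, using $0<\beta<1$; hence $G(K-1)=\mathcal{O}(1)$. Next I would treat $G_{1/2}(K-1)=\sum_{k=0}^{K-1}\sqrt{g(k)}$. The key step is to observe that $\sqrt{g(k)}=\sqrt{\alpha}\,(\sqrt{\beta})^k$ is again geometric, now with ratio $\sqrt{\beta}$, and since $0<\beta<1$ forces $0<\sqrt{\beta}<1$, this sum likewise converges to a constant: $G_{1/2}(K-1)\le \sqrt{\alpha}/(1-\sqrt{\beta})$, whence $G_{1/2}^2(K-1)\le \alpha/(1-\sqrt{\beta})^2=\mathcal{O}(1)$.

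Finally I would substitute these two bounds into the right-hand side of Corollary~\ref{corr:conv}. The summand $(2C_3/\sqrt{K}+2C_4/K)\,G(K-1)$ then contributes $\mathcal{O}(1/\sqrt{K})+\mathcal{O}(1/K)$, the summand $(2/\sqrt{K})\,G_{1/2}^2(K-1)$ contributes $\mathcal{O}(1/\sqrt{K})$, and the baseline term $(2(f(0)-f^*)+L)(1/K+1/\sqrt{Kn})$ supplies the $\mathcal{O}(1/\sqrt{Kn})$ piece. Discarding the $1/K$ contributions, which decay strictly faster than $1/\sqrt{K}$ for large $K$, leaves exactly $\mathcal{O}(1/\sqrt{Kn}+1/\sqrt{K})$, as claimed.

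I expect no serious obstacle here: once the geometric structure is recognized the argument is essentially bookkeeping of orders. The one point deserving care is the handling of $G_{1/2}$, where one must verify that taking the square root of $g(k)$ preserves summability, i.e., that the ratio $\sqrt{\beta}$ still lies strictly inside the unit interval so that the partial sums stay uniformly bounded in $K$. Without this observation one might mistakenly worry that $G_{1/2}^2(K-1)$ grows with $K$ and spoils the rate.
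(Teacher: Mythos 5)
Your proposal is correct and follows essentially the same route as the paper's proof: both specialize Corollary~\ref{corr:conv} to the geometric choice $g(k)=\alpha\beta^k$, observe that $G(K-1)$ and $G_{1/2}^2(K-1)$ are geometric sums bounded uniformly in $K$ (since $0<\beta<1$ implies $0<\sqrt{\beta}<1$), and then collect orders to obtain $\mathcal{O}\bigl(\frac{1}{\sqrt{Kn}}+\frac{1}{\sqrt{K}}\bigr)$. Your version is in fact slightly cleaner on the indexing, writing $G(K-1)=\alpha(1-\beta^{K})/(1-\beta)$ where the paper has a harmless off-by-one exponent.
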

\begin{proof}
We obtain $G(K-1) = \alpha \left( \frac{1 - \beta^{K-1}}{1 - \beta} \right)$ and $G_{1/2}^2(K) = \alpha \left( \frac{1 - \sqrt{\beta}^{K-1}}{1-\sqrt{\beta}} \right)^2$. The corollary then follows by noting that $\left(\frac{2C_3}{\sqrt{K}}+\frac{2C_4}{K}\right)\alpha \left( \frac{1 - \beta^{K-1}}{1 - \beta} \right) \sim \mathcal{O}\left(\frac{1}{\sqrt{K}}\right)$ and $\frac{4}{\sqrt{K}}\alpha \left( \frac{1 - \sqrt{\beta}^{K-1}}{1-\sqrt{\beta}} \right)^2 \sim \mathcal{O}\left(\frac{1}{\sqrt{K}}\right)$ when $K$ is sufficiently large.
\end{proof}

\section{Implementation}
\label{sec:impl}

There are a lot of popular frameworks for machine learning like PyTorch, TensorFlow, CNTK, etc. Almost all of these frameworks support parallel or distributed training. TensorFlow follows the parameter server approach for parallelization. PyTorch provides a module called DistributedDataParallel that implements AllReduce based training. Horovod is another framework developed by Uber that implements an optimized AllReduce algorithm. However, none of these frameworks provide native support for the training involving averaging with just neighbors. Hence we decided to implement the proposed algorithms without using any of the distributed modules in these frameworks.

We use PyTorch and MPI for our implementation. First, we point out why one-sided communication or remote memory access is necessary. Usually communication in high performance computing networks is two-sided. In other words, the sending PE starts the communication of a message by invoking a MPI\_Send operation and then the receiving PE completes the communication and receives the message by invoking a MPI\_Recv operation~\cite{gropp1999using}. In our event-triggered communication algorithm, the events for communication are dependent on the change in values of the parameters of the sender which is a local phenomenon. Thus, when an event is triggered in the sending PE, it can issue a MPI\_Send operation. However, since the intended receiving PE is not aware of when the event is triggered at the sender, it does not know when to issue a MPI\_Recv operation. So two-sided communication using MPI\_Send and MPI\_Recv cannot be used for our algorithm.

Hence we select one-sided communication for our purpose. In one-sided communication, only the sending PE has to know all the parameters of the message for both the sending and receiving side and can remotely write to a portion of the memory of the receiver without the receiver's involvement - hence the alternate name of Remote Memory Access~\cite{gropp2014using}. That region of memory in the receiver is called \textit{window} and can be publicly accessed. In our case, it is used to store the model parameters from the neighbors. So when an event for communication is triggered in the sending PE, it uses MPI\_Put to write its model parameters directly into the window of the corresponding neighbor PE. An illustration of one-sided vs two-sided communication is provided in Fig~\ref{fig:two-vs-onesided}.

\begin{figure}[h]
\centering
\includegraphics[scale=0.4]{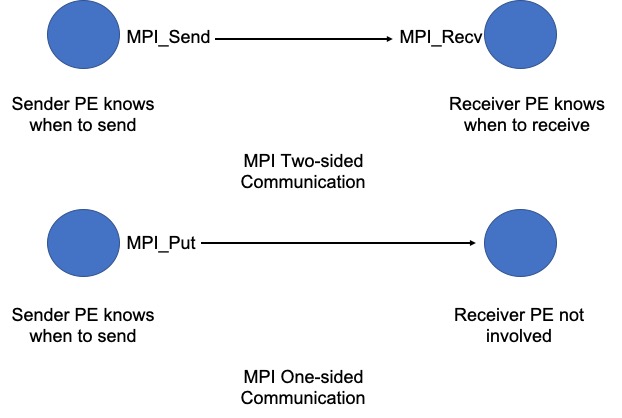}
\caption{Illustration of the difference between two-sided and one-sided communication.}
\label{fig:two-vs-onesided}
\end{figure}

It is worth noting that PyTorch does not support one-sided communication at this point. Recently, PyTorch released a C++ frontend called Libtorch which can be integrated with traditional C++ MPI implementations. Further, the C++ frontend is more suitable for HPC environments unlike the Python frontend. Hence we combine the neural network training functionalities of Libtorch with communication routines in MPI to implement our algorithm. For further details on our implementation, the reader is referred to~\cite{ghosh2020eventgrad}.

\section{Results}
\label{sec:results}

We perform experiments to evaluate the performance of our algorithm. All our simulations are done on CPUs. We use an HPC cluster of nodes with each node having 2 CPU Sockets of AMD's EPYC 24-core 2.3 GHz processor and 128 GB RAM per node. The cluster uses Mellanox EDR interconnect. The MPI library chosen is Open MPI 4.0.1 compiled with gcc 8.3.0. The version of Libtorch used is 1.5.0. We conduct our experiments on the CIFAR-10 dataset. We choose the residual neural network commonly used for training on CIFAR-10~\cite{he2016deep}. Our simulations use the ResNet-18 configuration. For training this network, a learning rate of $0.01$ is used with cross-entropy as the loss function and a mini-batch size of $256$. The value of horizon $h$ used to calculate the threshold in~\eqref{eqn:thres} is taken to be $1$. Note that we performed experiments on the MNIST dataset in our previous work~\cite{ghosh2020eventgrad}.

To illustrate our adaptive event-triggered threshold selection scheme, we look at how the norm of the parameters, i.e., the weights and biases, in the ResNet-18 model change with iterations. Note that the ResNet-18 model has $86$ parameters, all of which cannot be shown in this paper due to space constraints. Therefore we show a few parameters which vary in their style of evolution in Fig~\ref{fig:param_norm}. After few initial oscillations, the change of the values is gradual which suggests that not all parameters need to be communicated at every iteration. This paves the way for saving on communication of messages by event-triggered communication. The corresponding threshold evolution as calculated by the equation in~\eqref{eqn:thres} is shown in Fig~\ref{fig:thres_norm}. Since the threshold is proportional to the local slope, we see in parameter $1$ and $3$ that higher slopes during the early iterations of training lead to higher thresholds followed by a decrease in threshold due to decrease in slope. For parameters $2$ and $5$, which stay relatively flat, the threshold also follows a flat trend. It is important to note that since every parameter changes differently, their thresholds have to be chosen accordingly.

\begin{figure}[h]
\centering
\includegraphics[scale=0.15]{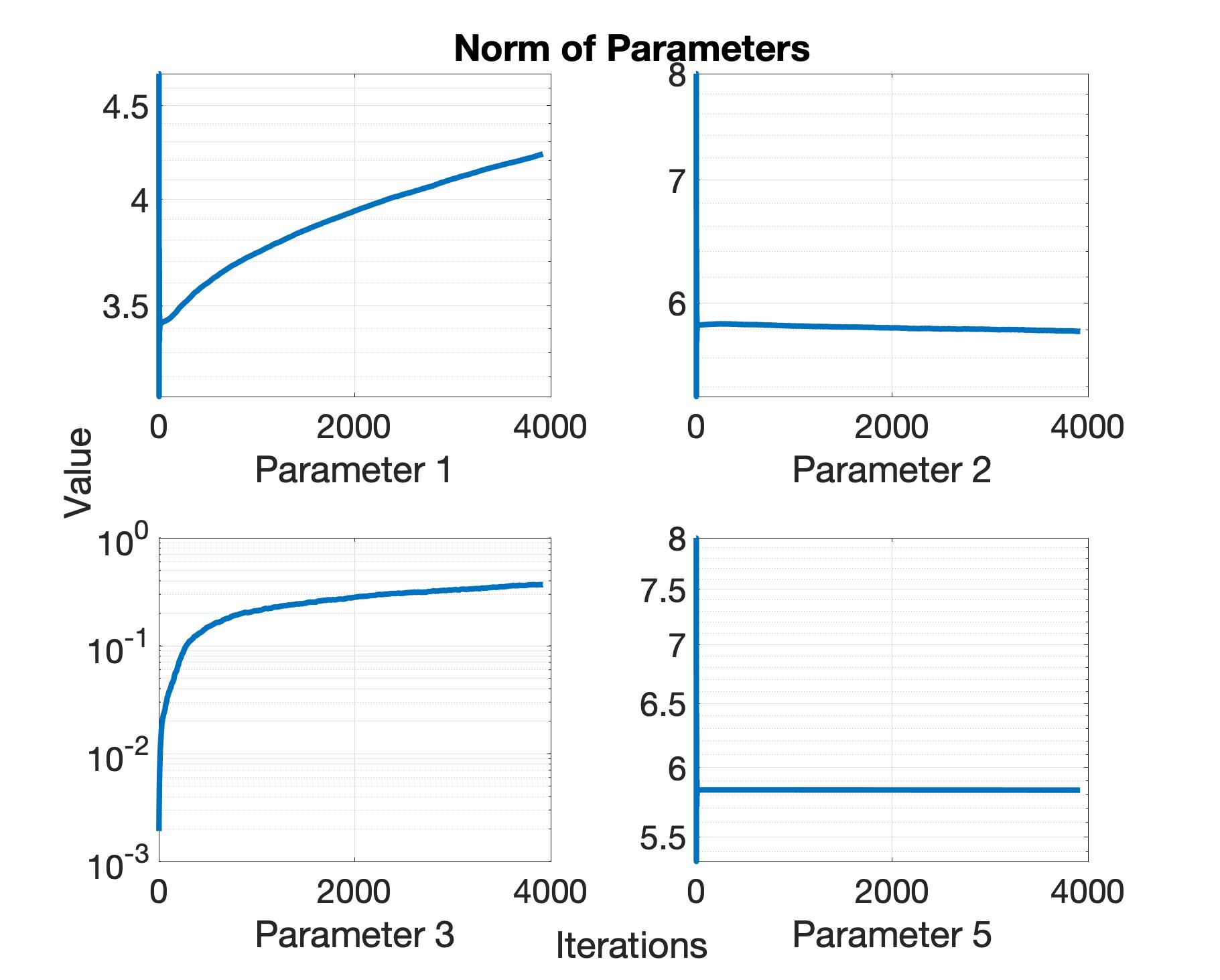}
\caption{Plot showing the evolution of the norm of the parameters of the neural network in a certain PE. Note that the parameters may vary in their trend of evolution.}
\label{fig:param_norm}
\end{figure}

\begin{figure}[h]
\centering
\includegraphics[scale=0.15]{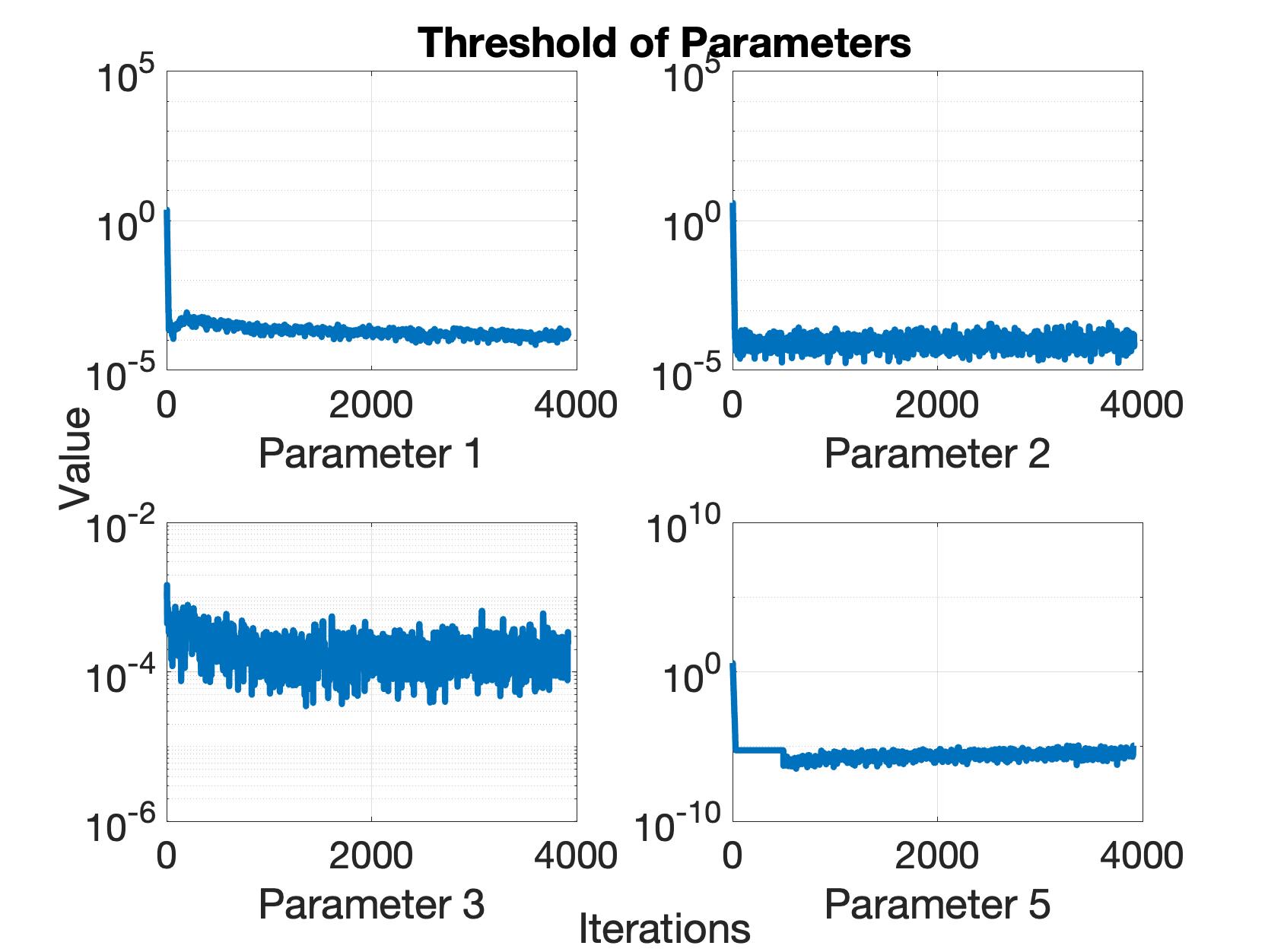}
\caption{Plot showing the adaptive threshold of the parameters shown in Fig~\ref{fig:param_norm}. The trend of the threshold is adaptive to the trend of evolution of the corresponding parameter.}
\label{fig:thres_norm}
\end{figure}

The thresholds in Fig~\ref{fig:thres_norm} have an oscillatory behavior. This is due to the fact that often the parameters in a neural network have local minor oscillations because of the nature of the stochastic gradient descent algorithm. The parameters in Fig~\ref{fig:param_norm} have these local oscillations, however they are not prominently visible due to the higher scale of the plot. Further, the stochastic nature of the MPI one-sided implementation of the algorithm amplifies the oscillations. It is desired that the threshold reflect the aggregate trend of evolution in the parameter and not the local oscillations. In order to solve this issue, the sender can keep a history of multiple previously communicated events instead of just one previous event. Then the average slope is calculated which is the mean of the slopes between two consecutive events in that history. This average slope is then multiplied by the horizon to obtain the threshold. The length of the history is a hyperparameter which is similar in notion to the length of a moving average filter. The higher the length, the smoother the trend but at the cost of increased computational complexity.

Having described details of selecting the threshold, we now look at the experimental convergence properties of the algorithm. We compare our event-triggered communication algorithm proposed in Algorithm~\ref{alg:event_ml} with respect to the regular communication algorithm in Algorithm~\ref{alg:regular_ml} from~\cite{lian2017can}. Fig~\ref{fig:loss_vs_epoch} shows the loss function over epochs for both these algorithms, each repeated for $10$ different runs shown by the errorbars. Note that an epoch refers to processing the entire dataset allotted to a PE once while an iteration refers to processing a batch once. Thus one epoch has multiple iterations which depends on the size of the batch. From Fig~\ref{fig:loss_vs_epoch}, we see that the decay in loss function seems similar for both the algorithms, indicating that they have similar speed of convergence. It is important to observe that the theoretical results in Section~\ref{sec:analysis} deal with a bound on convergence of the average of the parameters in all the PEs whereas the plot in Fig~\ref{fig:loss_vs_epoch} is concerned with experimental convergence of the loss function.

\begin{figure}[h]
\centering
\includegraphics[scale=0.15]{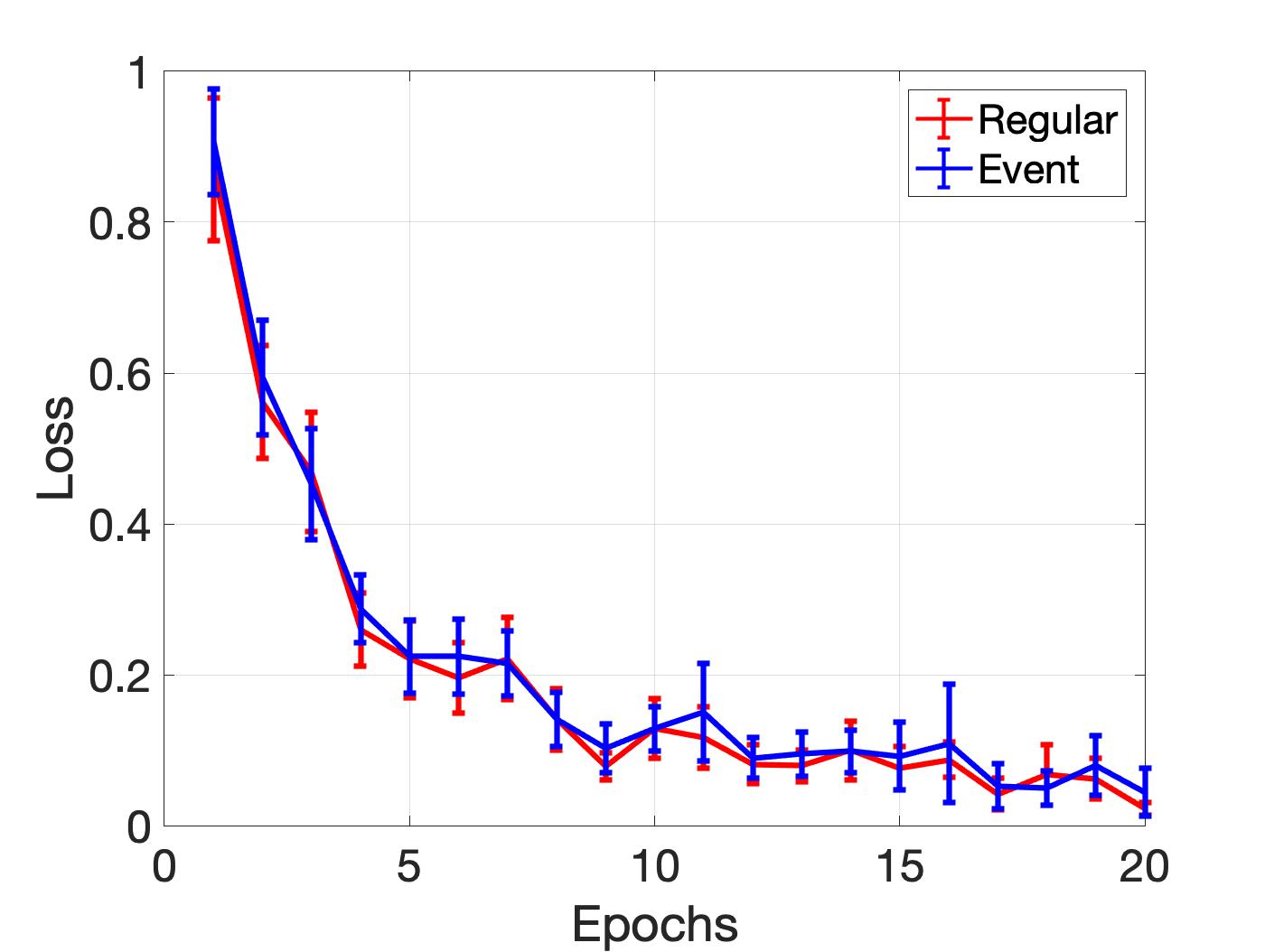}
\caption{Plot showing the loss function over epochs. The experiments have been repeated $10$ times to account for variations which have been considered in the errorbars. It is seen that the event-triggered communication algorithm has an experimental rate of convergence similar to that of the regular communication algorithm.}
\label{fig:loss_vs_epoch}
\end{figure}

After demonstrating similar rate of convergence, we focus on the main advantage of the event-triggered algorithm over the regular algorithm - reduction in the number of messages communicated while attaining similar accuracy. The reduction in messages is quantified by the percentage of messages of the regular algorithm that is sent in the event-triggered algorithm. Table~\ref{table:msgs} states the accuracy of regular and event-triggered communication as well as the percentage of messages in event-triggered communication after training for 20 epochs. 

\begin{table}[h!]
\begin{center}
\caption{Comparison of Regular Communication vs Event-Triggered Communication after 20 epochs. The Event-Triggered Communication algorithm drastically reduces the number of messages to be communicated by around $60\%$ while maintaining similar accuracy.}
\label{table:msgs}
\resizebox{\columnwidth}{!}{
 \begin{tabular}{| c | c | c | c |} 
 \hline 
 \textbf{Number of PEs} &
 \textbf{Regular Accuracy} &
 \textbf{Event-Triggered Accuracy} &
 \textbf{Percentage of Messages} \\ 
 \hline
 4 & 86.5 & 87 & 43.24 \\
 \hline
 8 & 86.3 & 86.2 & 42.98 \\
 \hline
 16 & 84.9 & 84.2 & 45.91
 \\
 \hline
 32 & 82.5 & 81.9 & 44.89
 \\
 \hline
\end{tabular}
}
\end{center}
\end{table}

We see that the event-triggered communication algorithm exchanges approximately $40\%$ of the messages of the regular (baseline) communication algorithm while achieving similar accuracy. As a reminder, the regular algorithm is the D-PSGD algorithm specified in~\cite{lian2017can} that is also suitable for the asynchronous decentralized environment that we consider. In other words, our event-triggered algorithm saves around $60\%$ of the messages as compared to the baseline algorithm while maintaining similar accuracy, thus alleviating the communication overhead. Note that the accuracy of both the regular and event-triggered algorithms decrease as the number of PEs increase. This is because as the ring of PEs get larger, messages comprising of neural network parameters require more hops to propagate through the entire ring. Hence, given same number of epochs, the larger ring comprising of more PEs will have lesser accuracy. If the algorithm is run for more epochs on more PEs, the accuracy will not degrade. Additionally, if the number of neighbors of each PE is increased, information can flow sooner, resulting in more accuracy.

A noteworthy feature of our proposed algorithm is that it is complementary to other algorithms for reduced communication that have been proposed in the literature. In other words, our algorithm can be combined with these algorithms. For instance, we can apply the techniques of quantization and sparsification on top of event-triggered communication to get even more savings in communication. It is important to clarify that most of the existing works in literature apply quantization and sparsification in the parameter server or AllReduce architecture~\cite{seide20141-bit,alistarh2018convergence} which is different from the decentralized reduction with just neighbors scenario that we deal with. 
However, to demonstrate how these approaches can be extended to our decentralized scenario and combined with the event-triggered approach, we focus on the sparsification method of Top-K. Specifically, when an event is triggered, we send just the Top-K percentage of the elements in a parameter, i.e., the weight matrix or the bias vector. Note that for a Top-K percent value of K, 2K percent of messages is being sent because the indices of the Top-K percent elements have to be sent in addition to their values.

Table~\ref{table:event_sparse} shows the results of combining Top-K percent sparsification with event-triggered communication. Before we compare the results in Table~\ref{table:event_sparse} with Table~\ref{table:msgs}, we note some important points. Firstly, even though all the simulation details of the event-triggered communication are kept the same between Table~\ref{table:msgs} and Table~\ref{table:event_sparse}, the percentage of messages sent are different between them. This is because sending just the Top-K elements of a parameter changes the overall evolution of the neural network which in turn leads to different adaptive thresholds and hence different sequence of events. Secondly, we have to consider the percentage of overall communication for Top-K sparsification in contrast to percentage of messages considered in Table~\ref{table:msgs}. This is due to the fact that the objective of Top-K sparsification is to reduce the size of each message sent. Note that in Table~\ref{table:msgs}, the percentage of overall communication is equivalent to the percentage of messages mentioned since entire parameters are sent during events. However, in the case of Top-K in Table~\ref{table:event_sparse}, the percentage of overall communication is 2K $\%$ of the percentage of messages sent. Now we see that the accuracy in Table~\ref{table:event_sparse} remains almost similar to that of Table~\ref{table:msgs} but the overall communication is approximately $7\%$ of that of the regular (baseline) communication algorithm. This is in contrast to the overall communication of around $40\%$ in the event-triggered algorithm in Table~\ref{table:msgs} with respect to the baseline. Thus Top-K sparsification combined with event-triggered communication requires around $\frac{1}{6}$-th of the communication required in just event-triggered communication while maintaining similar accuracy.

\begin{table}[h!]
\begin{center}
\caption{Combining Top-K$\%$ Sparsification with Event-Triggered (ET) Communication for K$=10$. The percent of communication is 2K$=20\%$ of the percent of messages. It is seen that the communication required here is around $\frac{1}{6}$-th of that in Event-Triggered (ET) communication without sparsification while maintaining similar accuracy as in Table~\ref{table:msgs}.}
\label{table:event_sparse}
\resizebox{\columnwidth}{!}{
 \begin{tabular}{| c | c | c | c |} 
 \hline 
 \textbf{Number of PEs} &
 \textbf{Sparse ET Accuracy} &
 \textbf{Percent of Messages} &
 \textbf{Percent of communication}\\ 
 \hline
 4 & 85.4 & 36.6 & 7.3 \\
 \hline
 8 & 85.26 & 37.7 & 7.5 \\
 \hline
 16 & 83.09 & 37.7 & 7.5
 \\
 \hline
\end{tabular}
}
\end{center}
\end{table}

\section{Conclusion}
\label{sec:conc}

This paper introduces a novel algorithm that reduces communication in parallel training of neural networks. The proposed EventGraD algorithm communicates the model parameters in events only when the value of the parameter changes by a threshold. The choice of the threshold for triggering events is chosen adaptively based on the slope of the parameter values. The algorithm can be applied to different neural network configurations and different datasets. An asymptotic bound on the rate of convergence is provided. The challenges of implementing this algorithm in a high performance computing cluster, such as the requirement of advanced communication protocols and libraries, are discussed. Experiments on the CIFAR-10 dataset show the superior communication performance of the algorithm while maintaining the same accuracy.  

\section{Acknowledgements}

This research was supported in part by the University of Notre Dame Center for Research Computing through its computing resources. The work of the authors was supported in part by ARO grant W911NF1910483, DARPA FA8750-20-2-0502 and NSF CBET-1953090. 

\section{Appendix}

The proof for the theoretical results in this paper are provided here. First we state some necessary lemmas. Lemma~\ref{lemma:rhok_bound} and Lemma~\ref{lemma:break_df} are reproduced from~\cite{lian2017can}.

\begin{lemma}
\label{lemma:rhok_bound}
Using Assumption~\ref{assum:sgd}, we obtain
\begin{align*}
\left\Vert \frac{\textbf{1}_n}{n} - W^k e_i\right\Vert^2 \leq \rho^k, \forall i \in {1,2,\dots,n}, k \in \mathbb{N}
\end{align*}
\end{lemma}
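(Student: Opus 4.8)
The plan is to interpret the quantity $\frac{\textbf{1}_n}{n} - W^k e_i$ as the action on $e_i$ of a matrix that contracts geometrically in every direction orthogonal to the all-ones vector. First I would introduce the rank-one orthogonal projector $P := \frac{\textbf{1}_n \textbf{1}_n^\top}{n}$ onto $\mathrm{span}(\textbf{1}_n)$ and observe that $\frac{\textbf{1}_n}{n} = P e_i$, since $\textbf{1}_n^\top e_i = 1$. Hence $\frac{\textbf{1}_n}{n} - W^k e_i = -(W^k - P)e_i$, so the left-hand side of the lemma equals $\Vert (W^k - P) e_i \Vert^2$. The task thus reduces to controlling the operator $W^k - P$.

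Next I would record the structural identities that follow from $W$ being symmetric and doubly stochastic: from $W \textbf{1}_n = \textbf{1}_n$ one gets $W^k P = P W^k = P$, and clearly $P^2 = P$. With these, a short induction shows $(W - P)^k = W^k - P$ for all $k \ge 1$: the base case is immediate, and in the inductive step the cross terms collapse because $W^k P = P$, $P W = P$ and $P^2 = P$, giving $(W^k - P)(W - P) = W^{k+1} - P - P + P = W^{k+1} - P$. This rewriting is the key reduction, turning a power of $W$ (whose top eigenvalue is $1$) into a power of $W - P$, whose spectrum has been stripped of that unit eigenvalue.

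Then I would invoke the spectral theorem for the symmetric matrix $W$. Writing $W = \sum_{j=1}^n \lambda_j u_j u_j^\top$ with orthonormal eigenvectors and $\lambda_1 = 1$, $u_1 = \textbf{1}_n/\sqrt{n}$, we have $W - P = \sum_{j=2}^n \lambda_j u_j u_j^\top$, so $W - P$ is symmetric with spectral norm $\max_{j \ge 2} |\lambda_j| = \rho$ by the spectral gap condition (Assumption~\ref{assum:sgd}). Consequently $\Vert W^k - P \Vert_2 = \Vert (W-P)^k \Vert_2 = \rho^k$. Applying this to the unit vector $e_i$ gives $\Vert (W^k - P) e_i \Vert^2 \le \Vert W^k - P \Vert_2^2 \, \Vert e_i \Vert^2 = \rho^{2k}$, and since $\rho < 1$ and $2k \ge k$ we have $\rho^{2k} \le \rho^k$, which is the claimed bound.

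The argument is essentially a routine spectral computation once the projector reduction is in place, so I do not anticipate a genuine obstacle. The only points requiring care are the algebraic identity $(W-P)^k = W^k - P$, where one must use double stochasticity (not merely row-stochasticity) to guarantee $P W = P$ alongside $W P = P$, and the appeal to symmetry, which ensures $W$ is orthogonally diagonalizable so that the spectral norm of $W - P$ equals $\rho$ exactly. It is worth noting that this route in fact yields the slightly stronger bound $\rho^{2k}$, of which the stated $\rho^k$ is an immediate weakening.
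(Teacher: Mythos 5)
Your proof is correct and follows essentially the same route as the paper: the paper's $W^{\infty} := \lim_{k\to\infty} W^k$ is exactly your projector $P = \mathbf{1}_n\mathbf{1}_n^\top/n$, and both arguments reduce the claim to bounding $\left\Vert (W^k - P)e_i \right\Vert$ by the operator norm of $W^k - P$ times $\Vert e_i \Vert = 1$, controlled via the spectral gap. Your version just makes explicit the spectral details the paper leaves implicit (the identity $(W-P)^k = W^k - P$ and the diagonalization), and correctly observes that the argument actually yields the stronger bound $\rho^{2k} \leq \rho^k$, a looseness the paper glosses over by asserting $\left\Vert W^{\infty} - W^k \right\Vert^2 \leq \rho^k$ directly.
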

\begin{proof}
Let $W^{\infty}:= lim_{k \rightarrow \infty} W^k$. Because of the assumptions, we get $\frac{\mathbf{1}_n}{n} = W^{\infty}e_i \forall i$ since $W$ is doubly stochastic and $\rho < 1$. Thus
\begin{align*}
\left\Vert \frac{\textbf{1}_n}{n} - W^k e_i\right\Vert^2 = &\left\Vert(W^{\infty} - W^k)e_i\right\Vert^2 \\
&\leq \left\Vert W^{\infty} - W^k\right\Vert^2 \left\Vert e_i\right\Vert^2 \\
&= \left\Vert W^{\infty} - W^k\right\Vert^2 \\
&\leq \rho^k.
\end{align*}
\end{proof}

\begin{lemma}
\label{lemma:break_df}
Under Assumption~\ref{assum:sgd}, the following holds:
\begin{align*}
\mathbb{E}\left\Vert \partial f(X_j)\right\Vert^2 \leq \sum_{h=1}^n 3\mathbb{E}L^2 \left\Vert \frac{\sum_{i'=1}^n x_{j,i'}}{n} - x_{j,h} \right\Vert^2 + 3n\varsigma^2 + 3\mathbb{E}\left\Vert \nabla f\left( \frac{X_j \mathbf{1}_n}{n}\right)\mathbf{1}_n^\top \right\Vert^2, \forall j
\end{align*}
\end{lemma}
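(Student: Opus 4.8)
The plan is to exploit the column structure of $\partial f(X_j)$. Since $\partial f(X_j) = \left[ \nabla f_1(x_{j,1}) \quad \cdots \quad \nabla f_n(x_{j,n}) \right]$, the squared Frobenius norm decomposes column-wise as $\left\Vert \partial f(X_j) \right\Vert^2 = \sum_{h=1}^n \left\Vert \nabla f_h(x_{j,h}) \right\Vert^2$. So it suffices to bound each column term $\left\Vert \nabla f_h(x_{j,h}) \right\Vert^2$ and then sum over $h$, which will naturally produce the three terms on the right-hand side.

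First I would introduce the column average $\bar{x}_j := \frac{X_j \textbf{1}_n}{n} = \frac{1}{n}\sum_{i'=1}^n x_{j,i'}$ and write each column gradient as an add-and-subtract sum of three pieces:
\begin{align*}
\nabla f_h(x_{j,h}) = \underbrace{\left( \nabla f_h(x_{j,h}) - \nabla f_h(\bar{x}_j) \right)}_{\text{local deviation}} + \underbrace{\left( \nabla f_h(\bar{x}_j) - \nabla f(\bar{x}_j) \right)}_{\text{heterogeneity}} + \underbrace{\nabla f(\bar{x}_j)}_{\text{full gradient}}.
\end{align*}
Applying the elementary bound $\left\Vert a + b + c \right\Vert^2 \leq 3\left\Vert a \right\Vert^2 + 3\left\Vert b \right\Vert^2 + 3\left\Vert c \right\Vert^2$ to each column isolates the three target contributions, each carrying the factor of $3$ that appears in the statement.

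Then I would bound the three pieces separately. For the local-deviation term, the $L$-Lipschitz gradient property in Assumption~\ref{assum:sgd} gives $\left\Vert \nabla f_h(x_{j,h}) - \nabla f_h(\bar{x}_j) \right\Vert^2 \leq L^2 \left\Vert x_{j,h} - \bar{x}_j \right\Vert^2$, which after summing over $h$ and multiplying by $3$ yields the first term. For the full-gradient term, summing $\left\Vert \nabla f(\bar{x}_j) \right\Vert^2$ over the $n$ identical columns gives $n\left\Vert \nabla f(\bar{x}_j) \right\Vert^2 = \left\Vert \nabla f(\bar{x}_j)\textbf{1}_n^\top \right\Vert^2$, producing the third term. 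The heterogeneity term is where I expect to have to be most careful: after taking expectations I would invoke the bounded-variance assumption $\mathbb{E}_{i \sim \mathcal{U}_{i([n])}} \left\Vert \nabla f_i(x) - \nabla f(x) \right\Vert^2 \leq \varsigma^2$, keeping in mind that this is an \emph{average} over $i$, so that $\sum_{h=1}^n \mathbb{E}\left\Vert \nabla f_h(\bar{x}_j) - \nabla f(\bar{x}_j) \right\Vert^2 \leq n\varsigma^2$, and the factor of $3$ then gives $3n\varsigma^2$.

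Assembling the three bounds, taking expectation throughout, and summing over $h$ reproduces exactly the three terms of the claimed inequality. The only genuine subtlety is the correct normalization of the variance assumption (average over $i$ versus sum over $i$) together with the Frobenius-norm identity $\sum_{h=1}^n \left\Vert \nabla f(\bar{x}_j) \right\Vert^2 = \left\Vert \nabla f(\bar{x}_j)\textbf{1}_n^\top \right\Vert^2$; the remainder is the standard three-way splitting combined with Lipschitz continuity, so I do not anticipate a real obstacle beyond this bookkeeping.
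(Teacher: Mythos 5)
Your proposal is correct and is essentially the paper's own proof: the paper performs the same three-way add-and-subtract (through $\partial f\bigl( \frac{X_j \mathbf{1}_n}{n}\mathbf{1}_n^\top \bigr)$ and $\nabla f\bigl( \frac{X_j \mathbf{1}_n}{n}\bigr)\mathbf{1}_n^\top$) at the matrix level and then passes to column sums of the Frobenius norm, which is exactly your per-column decomposition with the same uses of the Lipschitz property, the $n\varsigma^2$ variance bound, and the rank-one identity. The only difference is notational (matrix-level versus column-level bookkeeping), so the two arguments coincide.
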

\begin{proof}
The term $\mathbb{E}\left\Vert \partial f(X_j)\right\Vert^2$ is bounded as follows:
\begin{align*}
&\mathbb{E}\left\Vert \partial f(X_j)\right\Vert^2 \\
&\leq 3\mathbb{E}\left\Vert \partial f(X_j) - \partial f \left( \frac{X_j \mathbf{1}_n}{n}\mathbf{1}_n^\top \right)\right\Vert^2 \\
&\quad + 3\mathbb{E}\left\Vert \partial f \left( \frac{X_j \mathbf{1}_n}{n}\mathbf{1}_n^\top \right) - \nabla f\left( \frac{X_j \mathbf{1}_n}{n}\right)\mathbf{1}_n^\top \right\Vert^2 \\
&\quad + 3\mathbb{E} \left\Vert \nabla f  \left( \frac{X_j \mathbf{1}_n}{n}\right)\mathbf{1}_n^\top \right\Vert^2 \\
&\leq 3\mathbb{E}\left\Vert \partial f(X_j) - \partial f \left( \frac{X_j \mathbf{1}_n}{n}\mathbf{1}_n^\top \right)\right\Vert_{F}^2 \\
&\quad + 3n\varsigma^2 + 3\mathbb{E} \left\Vert \nabla f \left(\frac{X_j \mathbf{1}_n}{n}\right)\mathbf{1}_n^\top \right\Vert^2 \\
&\leq \sum_{h=1}^n 3\mathbb{E}L^2 \left\Vert \frac{\sum_{i'=1}^n x_{j,i'}}{n} - x_{j,h} \right\Vert^2 + 3n\varsigma^2 + 3\mathbb{E} \left\Vert \nabla f \left(\frac{X_j \mathbf{1}_n}{n}\right)\mathbf{1}_n^\top \right\Vert^2.
\end{align*}
\end{proof}

\begin{lemma}
\label{lemma:2a2b_bound}
For any two vectors $a,b$, the following is satisfied
\begin{align}
\Vert a + b \Vert^2 \leq 2 \Vert a \Vert^2 + 2 \Vert b \Vert^2
\end{align}
\end{lemma}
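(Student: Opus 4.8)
The plan is to reduce this elementary inequality to the expansion of the squared norm together with the nonnegativity of a square, which is the cleanest route in the Euclidean setting $\mathbb{R}^N$ used throughout the paper. The entire difficulty, such as it is, concentrates in controlling the single cross term that appears when $\Vert a + b \Vert^2$ is expanded via the inner product, so the strategy is simply to bound that term and substitute.

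First I would write out $\Vert a + b \Vert^2 = \Vert a \Vert^2 + 2 \langle a, b \rangle + \Vert b \Vert^2$ using the polarization identity for the Euclidean inner product. Next I would control the cross term by observing that $\Vert a - b \Vert^2 \geq 0$, and expanding the left-hand side gives $\Vert a \Vert^2 - 2 \langle a, b \rangle + \Vert b \Vert^2 \geq 0$, which rearranges immediately to the scalar bound $2\langle a, b \rangle \leq \Vert a \Vert^2 + \Vert b \Vert^2$ (equivalently, this is the AM--GM inequality $2xy \leq x^2 + y^2$ applied after Cauchy--Schwarz). Substituting this upper estimate for $2\langle a,b\rangle$ back into the expansion yields $\Vert a + b \Vert^2 \leq \Vert a \Vert^2 + \bigl(\Vert a \Vert^2 + \Vert b \Vert^2\bigr) + \Vert b \Vert^2 = 2\Vert a \Vert^2 + 2\Vert b \Vert^2$, which is exactly the claim.

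An equivalent argument I would keep in reserve, avoiding the inner product entirely, is to invoke convexity of the map $t \mapsto \Vert t \Vert^2$: since $a + b = \tfrac{1}{2}(2a) + \tfrac{1}{2}(2b)$ is a convex combination, convexity gives $\Vert a + b \Vert^2 \leq \tfrac{1}{2}\Vert 2a \Vert^2 + \tfrac{1}{2}\Vert 2b \Vert^2 = 2\Vert a \Vert^2 + 2\Vert b \Vert^2$. The honest assessment is that there is no genuine obstacle here: the result holds in any inner product space with no regularity hypotheses on $a$ and $b$, and the only point worth noting is that the constant $2$ is sharp, with equality precisely when $a = b$. This confirms that the factor of $2$ used in the downstream estimates of Lemma~\ref{lemma:break_df} and the convergence analysis cannot be improved.
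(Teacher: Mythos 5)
Your proposal is correct and takes essentially the same route as the paper: expand $\Vert a+b\Vert^2 = \Vert a\Vert^2 + 2\langle a,b\rangle + \Vert b\Vert^2$ and bound the cross term by $2\langle a,b\rangle \leq \Vert a\Vert^2 + \Vert b\Vert^2$, which the paper obtains via Cauchy--Schwarz followed by AM--GM and you obtain (equivalently, as you note) from $\Vert a-b\Vert^2 \geq 0$. If anything, your version is slightly cleaner, since the nonnegativity argument handles the equality case $a=b$ directly, whereas the paper writes the Cauchy--Schwarz and AM--GM steps with strict inequalities that fail precisely there.
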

\begin{proof}
We start with $\Vert a + b \Vert^2 = \Vert a \Vert^2 + 2 \left< a,b \right> + \Vert b \Vert^2$. Now 
\begin{align*}
\left<a,b\right> < \sqrt{\Vert a \Vert^2 \Vert b \Vert^2} < \frac{\Vert a \Vert^2 + \Vert b \Vert^2}{2}
\end{align*}
where the first is Cauchy-Schwarz inequality and the second is the geometric mean-arithmetic mean inequality. Substituting the above, the inequality follows.
\end{proof}

\begin{proof}[\textbf{Proof to Theorem~\ref{thm:conv}}]
We begin with
$f\left(\frac{X_{k+1}\textbf{1}_n}{n}\right)$:
\begin{align}
\label{eqn:start}
&\quad \mathbb{E}f\left(\frac{X_{k+1}\textbf{1}_n}{n}\right) \nonumber \\
&\quad =\mathbb{E}f\left(\frac{\hat{X}_{k}W\textbf{1}_n}{n}-\gamma\frac{\partial F\left(\hat{X}_{k};\xi_k\right)\textbf{1}_n}{n}\right) \nonumber \\
&\quad =\mathbb{E}f\left(\frac{\hat{X}_{k}\textbf{1}_n}{n}-\gamma\frac{\partial F\left(\hat{X}_{k};\xi_k\right)\textbf{1}_n}{n}\right) \nonumber \\
&\quad \leq \mathbb{E}f\left(\frac{\hat{X}_{k}\textbf{1}_n}{n}\right)-\gamma\mathbb{E}\left<\nabla f\left(\frac{\hat{X}_{k}\textbf{1}_n}{n}\right),\frac{\partial f \left(\hat{X}_{k}\right)\textbf{1}_n}{n}\right>+\frac{\gamma^2L}{2}\mathbb{E}\left\Vert \sum_{i=1}^{n}{\frac{\nabla F_i \left( \hat{x}_{k,i};\xi_{k,i}\right)}{n}}\right\Vert^2
\end{align}
where the previous step comes from the general Lipschitz property $f(y)<f(x)+\nabla f(x)^T(y-x)+\frac{L}{2}\left\Vert y-x\right\Vert^2$. The last term above is the second order moment of $\sum_{i=1}^n \frac{\nabla F_i \left( \hat{x}_{k,i};\xi_{k,i}\right)}{n}$. Now we can write
\begin{align}
\label{eqn:varmeanrel}
\mathbb{E} \left\Vert \sum_{i=1}^n \frac{\nabla F_i \left( \hat{x}_{k,i};\xi_{k,i}\right)}{n} \right\Vert^2 = \mathbb{E} \left\Vert \sum_{i=1}^n \frac{\nabla F_i \left( \hat{x}_{k,i};\xi_{k,i}\right) - \nabla f_i\left(\hat{x}_{k,i}\right)}{n} \right\Vert^2 + \mathbb{E}\left\Vert \sum_{i=1}^{n}{\frac{\nabla f_i\left(\hat{x}_{k,i}\right)}{n}}\right\Vert^2.
\end{align}

Applying~\eqref{eqn:varmeanrel} in~\eqref{eqn:start}, we obtain:
\begin{align}
\label{eqn:expanded}
&\mathbb{E}f\left(\frac{\hat{X}_{k}\textbf{1}_n}{n}\right)-\gamma\mathbb{E}\left<\nabla f\left(\frac{\hat{X}_{k}\textbf{1}_n}{n}\right),\frac{\partial f \left(\hat{X}_{k}\right)\textbf{1}_n}{n}\right>+\frac{\gamma^{2}L}{2}\mathbb{E}\left\Vert \sum_{i=1}^{n}{\frac{\nabla F_i \left( \hat{x}_{k,i};\xi_{k,i}\right)-\nabla f_i\left(\hat{x}_{k,i}\right)}{n}}\right\Vert^2 \nonumber \\
&\quad + \frac{\gamma^{2}L}{2}\mathbb{E}\left\Vert \sum_{i=1}^{n}{\frac{\nabla f_i\left(\hat{x}_{k,i}\right)}{n}}\right\Vert^2
\end{align}
For the second last term, we can show that
\begin{align}
\label{eqn:exp_linear}
\mathbb{E}\left\Vert \sum_{i=1}^{n}{\frac{\nabla F_i \left( \hat{x}_{k,i};\xi_{k,i}\right)-\nabla f_i\left(\hat{x}_{k,i}\right)}{n}}\right\Vert^2 
= 
\frac{1}{n^2}\sum_{i=1}^{n}\mathbb{E}\left\Vert \left(\nabla F_i \left( \hat{x}_{k,i};\xi_{k,i}\right)-\nabla f_i\left(\hat{x}_{k,i}\right)\right)\right\Vert^2.    
\end{align}
Applying~\eqref{eqn:exp_linear} into~\eqref{eqn:expanded},
\begin{align}
&\mathbb{E}f\left(\frac{\hat{X}_{k}\textbf{1}_n}{n}\right)
-
\gamma\mathbb{E}\left<\nabla f\left(\frac{\hat{X}_{k}\textbf{1}_n}{n}\right),\frac{\partial f \left(\hat{X}_{k}\right)\textbf{1}_n}{n}\right>
+
\frac{\gamma^2L}{2n^2}\sum_{i=1}^{n}\mathbb{E}\left\Vert \left(\nabla F_i \left( \hat{x}_{k,i};\xi_{k,i}\right)-\nabla f_i\left(\hat{x}_{k,i}\right)\right)\right\Vert^2 \nonumber
\\ +
&\frac{\gamma^2L}{2}\mathbb{E}\left\Vert \sum_{i=1}^{n}{\frac{\nabla f_i\left(\hat{x}_{k,i}\right)}{n}}\right\Vert^2
\leq \\
&\mathbb{E}f\left(\frac{\hat{X}_{k}\textbf{1}_n}{n}\right)
-
\gamma\mathbb{E}\left<\nabla f\left(\frac{\hat{X}_{k}\textbf{1}_n}{n}\right),\frac{\partial f \left(\hat{X}_{k}\right)\textbf{1}_n}{n}\right>
+
\frac{\gamma^2L\sigma^2}{2n} \nonumber
\\ +
& \frac{\gamma^2L}{2}\mathbb{E}\left\Vert \sum_{i=1}^{n}{\frac{\nabla f_i\left(\hat{x}_{k,i}\right)}{n}}\right\Vert^2
\end{align}

Using the property $\left<a,b\right>=\frac{1}{2}\left(\Vert a \Vert^2+\Vert b \Vert^2-\Vert a-b \Vert^2\right)$, we can rewrite the above as:
\begin{align}
\label{eqn:proof_T1define}
&\mathbb{E}f\left(\frac{\hat{X}_{k}\textbf{1}_n}{n}\right)
+
\frac{\gamma^2L\sigma^2}{2n}
-
\frac{\gamma}{2}\mathbb{E}\left\Vert\nabla f\left(\frac{\hat{X}_{k}\textbf{1}_n}{n}\right)\right\Vert^2
+
\frac{\gamma^2L-\gamma}{2}\mathbb{E}\left\Vert \sum_{i=1}^{n}{\frac{\nabla f_i\left(\hat{x}_{k,i}\right)}{n}}\right\Vert^2 \nonumber
\\ +
&\frac{\gamma}{2}\underbrace{{\mathbb{E}\left\Vert\nabla f\left(\frac{\hat{X}_{k}\textbf{1}_n}{n}\right)-\frac{\partial f \left(\hat{X}_{k}\right)\textbf{1}_n}{n}\right\Vert^2}}_{:=T_1}
\end{align}

Using the Lipschitz property again, we bound the first term as:
\begin{equation}
\begin{split}\label{lipschitzbound}
& \quad \mathbb{E}f\left(\frac{\hat{X}_{k}\textbf{1}_n}{n}\right)\leq \mathbb{E}f\left(\frac{X_{k}\textbf{1}_n}{n}\right)+\mathbb{E}\left<\nabla f\left(\frac{X_{k}\textbf{1}_n}{n}\right),\frac{\mathcal{E}_{k}1^n}{n}\right>+\frac{L}{2}\left\Vert\frac{\mathcal{E}_{k}1^n}{n}\right\Vert^2\\
& \quad \leq \mathbb{E}f\left(\frac{X_{k}\textbf{1}_n}{n}\right)+\frac{1}{2}\mathbb{E}\left\Vert \nabla f\left(\frac{X_{k}\textbf{1}_n}{n}\right) \right\Vert^2+\frac{1}{2}\left\Vert\frac{\mathcal{E}_{k}1^n}{n}\right\Vert^2+\frac{L}{2}\left\Vert\frac{\mathcal{E}_{k}1^n}{n}\right\Vert^2\\
& \quad = \mathbb{E}f\left(\frac{X_{k}\textbf{1}_n}{n}\right)+\frac{1}{2}\mathbb{E}\left\Vert \nabla f\left(\frac{X_{k}\textbf{1}_n}{n}\right) \right\Vert^2+\frac{L+1}{2}\left\Vert\frac{\mathcal{E}_{k}1^n}{n}\right\Vert^2\\
& \quad \leq \mathbb{E}f\left(\frac{X_{k}\textbf{1}_n}{n}\right)+\frac{1}{2}\mathbb{E}\left\Vert \nabla f\left(\frac{X_{k}\textbf{1}_n}{n}\right) \right\Vert^2+\frac{L+1}{2}\left\Vert\mathcal{E}_{k}\right\Vert_{F}^{2}\left\Vert\frac{1^n}{n}\right\Vert^2\\
& \quad \leq \mathbb{E}f\left(\frac{X_{k}\textbf{1}_n}{n}\right)+\frac{1}{2}\mathbb{E}\left\Vert \nabla f\left(\frac{X_{k}\textbf{1}_n}{n}\right) \right\Vert^2+\frac{L+1}{2}ng(k)\frac{1}{n}\\
& \quad = \mathbb{E}f\left(\frac{X_{k}\textbf{1}_n}{n}\right)+\frac{1}{2}\mathbb{E}\left\Vert \nabla f\left(\frac{X_{k}\textbf{1}_n}{n}\right) \right\Vert^2+\frac{L+1}{2}g(k)\\
\end{split}  
\end{equation}

where we used the inequality $\left\Vert\mathcal{E}_{k}\right\Vert_{F}^{2}\leq ng(k)$. $\left\Vert \; \right\Vert_F$ is the Frobenius norm.

Now we bound $T_1$ as:
\begin{align}
&T_1 
=
{\mathbb{E}\left\Vert\nabla f\left(\frac{\hat{X}_{k}\textbf{1}_n}{n}\right)-\frac{\partial f \left(\hat{X}_{k}\right)\textbf{1}_n}{n}\right\Vert^2} 
\leq
\frac{1}{n^2}\sum_{i=1}^{n}{\mathbb{E}\left\Vert \nabla f_i\left(\frac{\sum_{j=1}^{n}{\hat{x}_{k,j}}}{n}\right)-\nabla f_i\left(\hat{x}_{k,i}\right)\right\Vert^2} \nonumber 
\\ \leq
&\frac{L^2}{n^2}\sum_{i=1}^{n}{\underbrace{\mathbb{E}\left\Vert  \frac{\sum_{j=1}^{n}{\hat{x}_{k,j}}}{n}- \hat{x}_{k,i}\right\Vert^2}_{:=\hat{Q}_{k,i}}},
\end{align}
where $\hat{Q}_{k,i}$ is the square distance of the broadcasted variable $i$ to the average of all broadcasted local variables. From~\eqref{eqn:error_state} and Lemma~\ref{lemma:2a2b_bound}, we can conclude that $\hat{Q}_{k,i} \leq 2Q_{k,i} + 2Q^{\epsilon}_{k,i}$, i.e.,
\begin{align}
T_1 
\leq 
\frac{2L^2}{n^2}
\left(\sum_{i=1}^{n}{\underbrace{\mathbb{E}\left\Vert  \frac{\sum_{j=1}^{n}{x_{k,j}}}{n}- x_{k,i}\right\Vert^2}_{Q_{k,i}}}
+
\sum_{i=1}^{n}{\underbrace{\mathbb{E}\left\Vert  \frac{\sum_{j=1}^{n}{\epsilon_{k,j}}}{n}- \epsilon_{k,i}\right\Vert^2}_{Q^{\epsilon}_{k,i}}}\right)
\end{align}
where $\epsilon_{k,i}$ contains $\epsilon_{k,i,I}$ for all parameters $I = \{1,\dots,N\}$. Now we can bound $Q^{\epsilon}_{k,i}$ as:
\begin{align}
Q^{\epsilon}_{k,i} \leq ng(k)
\end{align}
which implies
\begin{align}
\label{eqn:QhatQng}
\hat{Q}_{k,i} \leq 2Q_{k,i} + 2ng(k).
\end{align}

Now we need to find a bound for $Q_{k,i}$

\begin{align}
&Q_{k,i} = \mathbb{E}\left\Vert  \frac{\sum_{j=1}^{n}{x_{k,j}}}{n}-x_{k,i}\right\Vert^2 \nonumber
\\ &=
\mathbb{E}\left\Vert  \frac{X_k \textbf{1}_n}{n}-X_k e_i\right\Vert^2 \text{where} \quad e_i \quad \text{is the one-hot encoded vector} \nonumber
\\ &=
\mathbb{E}\left\Vert \frac{\hat{X}_{k-1}W \textbf{1}_n - \gamma \partial F\left(\hat{X}_{k-1};\xi_{k-1}\right) \textbf{1}_n}{n} - \hat{X}_{k-1}We_i + \gamma \partial F\left(\hat{X}_{k-1};\xi_{k-1}\right)e_i\right\Vert^2 \nonumber
\\ &=
\mathbb{E}\left\Vert \frac{X_{k-1} \textbf{1}_n+\mathcal{E}_{k-1} \textbf{1}_n-\gamma \partial F\left(\hat{X}_{k-1};\xi_{k-1}\right) \textbf{1}_n}{n} - X_{k-1}We_i - \mathcal{E}_{k-1}We_i + \gamma \partial F\left(\hat{X}_{k-1};\xi_{k-1}\right)e_i\right\Vert^2 \nonumber 
\\ &=
\mathbb{E}\left\Vert \frac{X_{0} \textbf{1}_n + \sum_{k=0}^{K-1}{\mathcal{E}_{k} \textbf{1}_n} - \gamma \sum_{k=0}^{K-1}{\partial F\left(\hat{X}_{k};\xi_{k}\right) \textbf{1}_n}}{n} - X_{0}W^ke_i - \sum_{k=0}^{K-1}{\mathcal{E}_{k}We_i} \right. \nonumber
\\
&\quad \left.+ \gamma \sum_{k=0}^{K-1}{\partial F\left(\hat{X}_{k};\xi_{k}\right)W^{K-1+k}e_i}\right\Vert^2 \nonumber
\\ &\stackrel{X_0=0}{=}
\mathbb{E}\left\Vert \sum_{k=0}^{K-1}{\frac{\mathcal{E}_{k} \textbf{1}_n}{n}} - \sum_{k=0}^{K-1}{\mathcal{E}_{k}We_i} - \gamma \sum_{k=0}^{K-1}{\partial F\left(\hat{X}_{k};\xi_{k}\right)\left(\frac{\textbf{1}_n}{n} - W^{K-1+k}e_i\right)}\right\Vert^2 \nonumber
\\ &\leq
\mathbb{E}\left\Vert \sum_{k=0}^{K-1}{\mathcal{E}_{k}\left(\frac{\textbf{1}^n}{n} - {We_i}\right)}\right\Vert^2 + \gamma^2\mathbb{E}\left\Vert \sum_{k=0}^{K-1}{\partial F\left(\hat{X}_{k};\xi_{k}\right)\left(\frac{\textbf{1}_n}{n} - W^{K-1+k}e_i\right)}\right\Vert^2 \nonumber
\\ &\leq
\mathbb{E} \sum_{k=0}^{K-1}{\left\Vert\mathcal{E}_{k}\right\Vert_{F}^2\left\Vert \left(\frac{\textbf{1}^n}{n}-{We_i}\right)\right\Vert^2} + 2\mathbb{E} \sum_{k\neq k^\prime}^{}{\left\Vert\mathcal{E}_{k}\right\Vert_{F}\left\Vert \left(\frac{\textbf{1}^n}{n}-{We_i}\right)\right\Vert \left\Vert\mathcal{E}_{k^\prime}\right\Vert_{F}\left\Vert \left(\frac{\textbf{1}^n}{n}-{We_i}\right)\right\Vert} \nonumber
\\ & \quad +
\gamma^2\mathbb{E}\left\Vert \sum_{k=0}^{K-1}{\partial F\left(\hat{X}_{k};\xi_{k}\right)\left(\frac{\textbf{1}_n}{n}-W^{K-1+k}e_i\right)}\right\Vert^2 \nonumber
\\ &\leq
\mathbb{E} \sum_{k=0}^{K-1}{ng(k)\rho}+2\mathbb{E} \sum_{k\neq k^\prime}^{}{ \sqrt{ng(k)}\sqrt{\rho}\sqrt{\rho}\sqrt{ng(k^\prime)} }+\gamma^2\mathbb{E}\left\Vert \sum_{k=0}^{K-1}{\partial F\left(\hat{X}_{k};\xi_{k}\right)\left(\frac{\textbf{1}_n}{n}-W^{K-1+k}e_i\right)}\right\Vert^2 \nonumber
\\ &\leq n\rho\sum_{k=0}^{K-1}{g(k)}+n\rho\sum_{k\neq k^\prime}^{}{2\sqrt{g(k)g(k^\prime)}}+\gamma^2\mathbb{E}\left\Vert \sum_{k=0}^{K-1}{\partial F\left(\hat{X}_{k};\xi_{k}\right)\left(\frac{\textbf{1}_n}{n}-W^{K-1+k}e_i\right)}\right\Vert^2 \nonumber
\\ &\leq
n\rho\underbrace{\left(\sum_{k=0}^{K-1}{\sqrt{g(k)}}\right)^2}_{:=\left(G_{1/2}(K-1)\right)^2}+2\gamma^2\underbrace{\mathbb{E}\left\Vert \sum_{k=0}^{K-1}{\left(\partial F\left(\hat{X}_{k};\xi_{k}\right)-\partial f(\hat{X}_k)\right)\left(\frac{\textbf{1}_n}{n}-W^{K-1+k}e_i\right)}\right\Vert^2}_{:=T_2} \nonumber
\\ &\quad +
2\gamma^2\underbrace{\mathbb{E}\left\Vert \sum_{k=0}^{K-1}{\partial f\left(\hat{X}_{k}\right)\left(\frac{\textbf{1}_n}{n}-W^{K-1+k}e_i\right)}\right\Vert^2}_{:=T_3} \nonumber,
\end{align}

where $G_{1/2}(k) = \sum_{k=0}^K \sqrt{g(k)}$ as defined before. We then bound $T_2$ as follows:

\begin{align}
T_2 &= \mathbb{E}\left\Vert \sum_{k=0}^{K-1}{\left(\partial F\left(\hat{X}_{k};\xi_{k}\right)-\partial f(\hat{X}_k)\right)\left(\frac{\textbf{1}_n}{n}-W^{K-1+k}e_i\right)}\right\Vert^2 \nonumber
\\ &=
\sum_{k=0}^{K-1}{\mathbb{E}\left\Vert\left(\partial F\left(\hat{X}_{k};\xi_{k}\right)-\partial f(\hat{X}_k)\right)\left(\frac{\textbf{1}_n}{n}-W^{K-1+k}e_i\right)\right\Vert^2} \nonumber
\\ &\leq \sum_{k=0}^{K-1}{\mathbb{E}\left\Vert\left(\partial F\left(\hat{X}_{k};\xi_{k}\right)-\partial f(\hat{X}_k)\right)\right\Vert_{F}^2 \left\Vert \left(\frac{\textbf{1}_n}{n}-W^{K-1+k}e_i\right)\right\Vert^2} \nonumber
\\ &\leq \sum_{k=0}^{K-1}{n\sigma^2 \left\Vert \left(\frac{\textbf{1}_n}{n}-W^{K-1+k}e_i\right)\right\Vert^2} \leq n\sigma^2\sum_{k=0}^{K-1}{\rho^{K-1+k}} \leq \frac{n\sigma^2}{1-\rho}
\end{align}

The bound for $T_3$ is as follows:

\begin{align}
T_3 &= \mathbb{E}\left\Vert \sum_{k=0}^{K-1}{\partial f(\hat{X}_k)\left(\frac{\textbf{1}_n}{n}-W^{K-1+k}e_i\right)}\right\Vert^2 \nonumber
\\ &=
\underbrace{\sum_{k=0}^{K-1}{\mathbb{E}\left\Vert\partial f(\hat{X}_k)\left(\frac{\textbf{1}_n}{n}-W^{K-1+k}e_i\right)\right\Vert}^2}_{:=T_4} \nonumber
\\ &\quad +
\underbrace{\sum_{k\neq k^\prime}^{K-1}{\mathbb{E}\left<\partial f(\hat{X}_k)\left(\frac{\textbf{1}_n}{n}-W^{K-1+k}e_i\right),\partial f(\hat{X}_{k^\prime})\left(\frac{\textbf{1}_n}{n}-W^{K-1+k^\prime}e_i\right)\right>}}_{:=T_5}
\end{align}

We will bound $T_4$ and $T_5$ separately. $T_4$ is bound as:

\begin{align}
T_4 &= \sum_{k=0}^{K-1}{\mathbb{E}\left\Vert\partial f(\hat{X}_k)\left(\frac{\mathbf{1}_n}{n}-W^{K-1+k}e_i\right)\right\Vert}^2 \nonumber
\\ &\leq
\sum_{k=0}^{K-1}{\mathbb{E}\left\Vert\partial f(\hat{X}_k)\right\Vert^2\left\Vert\left(\frac{\mathbf{1}_n}{n}-W^{K-1+k}e_i\right)\right\Vert}^2 \nonumber
\\ &\stackrel{\text{Lemma~\ref{lemma:break_df}}}{\leq}
3\sum_{k=0}^{K-1}\sum_{i=1}^{n}{\mathbb{E}L^2\hat{Q}_{k,i}\left\Vert \frac{\mathbf{1}_n}{n}-W^{K-1+k}e_i \right\Vert^2}+\frac{3n\varsigma^2}{1-\rho} \nonumber
\\ &\quad +
3\sum_{k=0}^{K-1}{\mathbb{E}\left\Vert \nabla f\left( \frac{\hat{X}_k\mathbf{1}_n}{n} \right)\mathbf{1}^\mathsf{T}_n \right\Vert^2\left\Vert \frac{\mathbf{1}_n}{n}-W^{K-1-k} \right\Vert^2}    
\end{align}

$T_5$ is bound as:

\begin{align}
T_5 &= \sum_{k\neq k^\prime}^{K-1}\mathbb{E}\left<\partial f(\hat{X}_k)\left(\frac{\textbf{1}_n}{n}-W^{K-1+k}e_i\right),\partial f(\hat{X}_{k^\prime})\left(\frac{\textbf{1}_n}{n}-W^{K-1+k^\prime}e_i\right)\right> \nonumber
\\ &\leq
\sum_{k\neq k^\prime}^{K-1}{\mathbb{E}\left\Vert\partial f(\hat{X}_k)\right\Vert\left\Vert\frac{\textbf{1}_n}{n}-W^{K-1+k}e_i\right\Vert\left\Vert\partial f(\hat{X}_{k^\prime})\right\Vert\left\Vert\frac{\textbf{1}_n}{n}-W^{K-1+k^\prime}e_i\right\Vert} \nonumber
\\ &\leq
\sum_{k\neq k^\prime}^{K-1}{\mathbb{E}\left(\frac{\left\Vert\partial f(\hat{X}_k)\right\Vert^2}{2}+\frac{\left\Vert\partial f(\hat{X}_{k^\prime})\right\Vert^2}{2}\right)\rho^{K-1-\frac{k+k^\prime}{2}}} \nonumber
\\ &\leq
\sum_{k\neq k^\prime}^{K-1}{\mathbb{E}\left(\left\Vert\partial f(\hat{X}_k)\right\Vert^2\right)\rho^{K-1-\frac{k+k^\prime}{2}}} \nonumber
\\ &\stackrel{\text{Lemma~\ref{lemma:break_df}}}{\leq}
\underbrace{3\sum_{k\neq k^\prime}^{K-1}{\left( \sum_{i=1}^{n}{\mathbb{E}L^2\hat{Q}_{k,i}}+\left\Vert \nabla f\left( \frac{\hat{X}_k\mathbf{1}_n}{n} \right)\mathbf{1}^\mathsf{T}_n \right\Vert^2 \right)\rho^{K-1-\frac{k+k^\prime}{2}}}}_{:=T_6}+\underbrace{\sum_{k\neq k^\prime}^{K-1}{3n\varsigma^2\rho^{K-1-\frac{k+k^\prime}{2}}}}_{:=T_7}
\end{align}

Now $T_6$ is bounded as follows:

\begin{align}
T_6 &= 3\sum_{k\neq k^\prime}^{K-1}{\left( \sum_{i=1}^{n}{\mathbb{E}L^2\hat{Q}_{k,i}}+\left\Vert \nabla f\left( \frac{\hat{X}_k\mathbf{1}_n}{n} \right)\mathbf{1}^\mathsf{T}_n \right\Vert^2 \right)\rho^{K-1-\frac{k+k^\prime}{2}}} \nonumber
\\ &\stackrel{\eqref{eqn:QhatQng}}{\leq}
6\sum_{k=0}^{K-1}{\left( \sum_{i=1}^{n}{2\mathbb{E}L^2Q_{k,i}}+2n^2L^2g(k)+\left\Vert \nabla f\left( \frac{\hat{X}_k\mathbf{1}_n}{n} \right)\mathbf{1}^\mathsf{T}_n \right\Vert^2 \right) \sum^{K-1}_{k^\prime=k+1}{\sqrt{\rho}^{2K-2-k-k^\prime}}} \nonumber
\\ &\leq
6\sum_{k=0}^{K-1}{\left( \sum_{i=1}^{n}{2\mathbb{E}L^2Q_{k,i}}+2n^2L^2g(k)+\left\Vert \nabla f\left( \frac{\hat{X}_k\mathbf{1}_n}{n} \right)\mathbf{1}^\mathsf{T}_n \right\Vert^2 \right) \frac{\sqrt{\rho}^{K-1-k}}{1-\sqrt{\rho}}}
\end{align}

And $T_7$ is bounded as such:

\begin{align}
T_7 &= 
6n\varsigma^2\sum_{k > k^\prime}^{K-1}{\rho^{K-1-\frac{k+k^\prime}{2}}}
=
6n\varsigma^2\frac{\left( \rho^{\frac{k}{2}}-1 \right)\left( \rho^{\frac{k}{2}}-\sqrt{\rho} \right)}{\left( \sqrt{\rho}-1\right)^2\left(\sqrt{\rho}+1 \right)}
\\ &\leq
6n\varsigma^2 \frac{1}{\left(1-\sqrt{\rho}\right)^2}
\end{align}

Now combining $T_6$ and $T_7$ into $T_5$, and then $T_5$ and $T_4$ into $T_3$, we obtain the following upper bound:

\begin{align}
T_3 &\leq 3\sum_{k=0}^{K-1}\sum_{i=1}^{n}{\mathbb{E}L^2\hat{Q}_{k,i}\left\Vert \frac{\mathbf{1}_n}{n}-W^{K-1+k}e_i \right\Vert^2} \nonumber
\\ &\quad +
\frac{3n\varsigma^2}{1-\rho} + 3\sum_{k=0}^{K-1}{\mathbb{E}\left\Vert \nabla f\left( \frac{\hat{X}_k\mathbf{1}_n}{n} \right)\mathbf{1}^\mathsf{T}_n \right\Vert^2\left\Vert \frac{\mathbf{1}_n}{n}-W^{K-1-k} \right\Vert^2} \nonumber
\\ &\quad +
6\sum_{k=0}^{K-1}{\left( \sum_{i=1}^{n}{2\mathbb{E}L^2Q_{k,i}}+2n^2L^2g(k)+\left\Vert \nabla f\left( \frac{\hat{X}_k\mathbf{1}_n}{n} \right)\mathbf{1}^\mathsf{T}_n \right\Vert^2 \right) \frac{\sqrt{\rho}^{K-1-k}}{1-\sqrt{\rho}}}+\frac{6n\varsigma^2}{\left(1-\sqrt{\rho}\right)^2} \nonumber
\\ &\leq
3\sum_{k=0}^{K-1}\sum_{i=1}^{n}{\mathbb{E}L^2\hat{Q}_{k,i}\left\Vert \frac{\mathbf{1}_n}{n}-W^{K-1+k}e_i \right\Vert^2}+ 3\sum_{k=0}^{K-1}{\mathbb{E}\left\Vert \nabla f\left( \frac{\hat{X}_k\mathbf{1}_n}{n} \right)\mathbf{1}^\mathsf{T}_n \right\Vert^2\left\Vert \frac{\mathbf{1}_n}{n}-W^{K-1-k} \right\Vert^2} \nonumber
\\ &\quad + 6\sum_{k=0}^{K-1}{\left( \sum_{i=1}^{n}{2\mathbb{E}L^2Q_{k,i}}+2n^2L^2g(k)+\left\Vert \nabla f\left( \frac{\hat{X}_k\mathbf{1}_n}{n} \right)\mathbf{1}^\mathsf{T}_n \right\Vert^2 \right) \frac{\sqrt{\rho}^{K-1-k}}{1-\sqrt{\rho}}} +  \frac{9n\varsigma^2}{\left(1-\sqrt{\rho}\right)^2} \nonumber 
\\ &\leq 3\sum_{k=0}^{K-1}\sum_{i=1}^{n}{\mathbb{E}L^2\hat{Q}_{k,i}\left\Vert \frac{\mathbf{1}_n}{n}-W^{K-1+k}e_i \right\Vert^2}+ 3\sum_{k=0}^{K-1}{\mathbb{E}\left\Vert \nabla f\left( \frac{\hat{X}_k\mathbf{1}_n}{n} \right)\mathbf{1}^\mathsf{T}_n \right\Vert^2\left\Vert \frac{\mathbf{1}_n}{n}-W^{K-1-k} \right\Vert^2} \nonumber
\\ &\quad +
6\sum_{k=0}^{K-1}{\left( \sum_{i=1}^{n}{2\mathbb{E}L^2Q_{k,i}}+\left\Vert \nabla f\left( \frac{\hat{X}_k\mathbf{1}_n}{n} \right)\mathbf{1}^\mathsf{T}_n \right\Vert^2 \right) \frac{\sqrt{\rho}^{K-1-k}}{1-\sqrt{\rho}}} \nonumber
\\ &\quad + 
\frac{9n\varsigma^2}{\left(1-\sqrt{\rho}\right)^2}+12n^2L^2G(K-1)
\end{align}

As a reminder, we have defined the following expressions as $G(k) = \sum_{k=0}^K g(k), \;
G_{1/2}(k) = \sum_{k=0}^K \sqrt{g(k)}$. Now we plug the above bound, along with the bound on $T_2$ to obtain the following:

\begin{align}
Q_{k,i} &\leq  n\rho G_{1/2}^2(K-1)
+ 
24\gamma^2n^2L^2 G(K-1) 
+ 
\frac{2\gamma^2n\sigma^2}{1-\rho} \nonumber
\\ &\quad +
6\gamma^2\sum_{k=0}^{K-1}\sum_{i=1}^{n}{\mathbb{E}L^2\hat{Q}_{k,i}\left\Vert \frac{\mathbf{1}_n}{n} - W^{K-1-k}e_i \right\Vert^2} \nonumber
\\ &\quad +
6\gamma^2\sum_{k=0}^{K-1}{\mathbb{E}\left\Vert \nabla f\left( \frac{\hat{X}_k\mathbf{1}_n}{n} \right)\mathbf{1}^\mathsf{T}_n \right\Vert^2\left\Vert \frac{\mathbf{1}_n}{n}-W^{K-1-k} \right\Vert^2} \nonumber
\\ &\quad +
12\gamma^2\sum_{k=0}^{K-1}{\left( \sum_{i=1}^{n}{2\mathbb{E}L^2Q_{k,i}}
+
\mathbb{E}\left\Vert \nabla f\left( \frac{\hat{X}_k\mathbf{1}_n}{n} \right)\mathbf{1}^\mathsf{T}_n \right\Vert^2 \right) \frac{\sqrt{\rho}^{K-1-k}}{1-\sqrt{\rho}}}
+
\frac{18\gamma^2n\varsigma^2}{\left(1-\sqrt{\rho}\right)^2} \nonumber
\\ &\leq
n\rho G_{1/2}^2(K-1) + 24\gamma^2n^2L^2G(K-1)+\frac{2\gamma^2n\sigma^2}{1-\rho}+\frac{18\gamma^2n\varsigma^2}{\left(1-\sqrt{\rho}\right)^2} \nonumber
\\ &\quad + 12\gamma^2\sum_{k=0}^{K-1}\sum_{i=1}^{n}{\mathbb{E}L^2\left(Q_{k,i}+ng(k)\right)\rho^{K-1-k}} +  6\gamma^2\sum_{k=0}^{K-1}{\mathbb{E}\left\Vert \nabla f\left( \frac{\hat{X}_k\mathbf{1}_n}{n} \right)\mathbf{1}^\mathsf{T}_n \right\Vert^2\rho^{K-1-k}} \nonumber
\\ &\quad +
12\gamma^2\sum_{k=0}^{K-1}{\left( \sum_{i=1}^{n}{2\mathbb{E}L^2Q_{k,i}}+\mathbb{E}\left\Vert \nabla f\left( \frac{\hat{X}_k\mathbf{1}_n}{n} \right)\mathbf{1}^\mathsf{T}_n \right\Vert^2 \right) \frac{\sqrt{\rho}^{K-1-k}}{1-\sqrt{\rho}}} \nonumber
\\ &\leq 
n\rho G_{1/2}^2(K-1)+12\gamma^2n^2(2L^2+1)G(K-1)+\frac{2\gamma^2n\sigma^2}{1-\rho}+\frac{18\gamma^2n\varsigma^2}{\left(1-\sqrt{\rho}\right)^2} \nonumber
\\ &\quad + 12\gamma^2\sum_{k=0}^{K-1}{\mathbb{E}\left\Vert \nabla f\left( \frac{\hat{X}_k\mathbf{1}_n}{n} \right)\mathbf{1}^\mathsf{T}_n \right\Vert^2 \left(\rho^{K-1-k}+\frac{2\sqrt{\rho}^{K-1-k}}{1-\sqrt{\rho}}\right)} \nonumber
\\ &\quad +
12\gamma^2\sum_{k=0}^{K-1}{ \sum_{i=1}^{n}{\mathbb{E}L^2Q_{k,i}} \left(\rho^{K-1-k}+\frac{2\sqrt{\rho}^{K-1-k}}{1-\sqrt{\rho}}\right)}.
\end{align}

Let $M_k=\mathbb{E}\sum^{n}_{i=1}{\frac{Q_{k,i}}{n}}$,i.e., the expected average of $Q_k$ in all nodes. We then adjust the previous equation to obtain a bound on $M_k$ as follows:

\begin{align}
M_k &\leq n\rho G_{1/2}^2(K-1)+12\gamma^2n^2(2L^2+1)G(K-1)+\frac{2n\gamma^2\sigma^2}{1-\rho}+\frac{18n\gamma^2\varsigma^2}{\left(1-\sqrt{\rho}\right)^2} \nonumber
\\ &\quad +
12\gamma^2\sum_{k=0}^{K-1}{\mathbb{E}\left\Vert \nabla f\left( \frac{\hat{X}_k\mathbf{1}_n}{n} \right)\mathbf{1}^\mathsf{T}_n \right\Vert^2 \left(\rho^{K-1-k}+\frac{2\sqrt{\rho}^{K-1-k}}{1-\sqrt{\rho}}\right)} \nonumber
\\ &\quad +
12\gamma^2L^2n\sum_{k=0}^{K-1}{ M_k \left(\rho^{K-1-k}+\frac{2\sqrt{\rho}^{K-1-k}}{1-\sqrt{\rho}}\right)}
\end{align}

Now we sum it from $k=0$ to $K-1$ to obtain the following:
\begin{align}
&\sum_{k=0}^{K-1}{M_k} 
\leq 
Kn\rho G_{1/2}^2(K-1)+12K\gamma^2n^2(2L^2+1)G(K-1)+\frac{2Kn\gamma^2\sigma^2}{1-\rho}+\frac{18Kn\gamma^2\varsigma^2}{\left(1-\sqrt{\rho}\right)^2} \nonumber
\\ &\quad +
12\gamma^2\sum_{k=0}^{K-1}{\sum_{i=0}^{K-1}{\mathbb{E}\left\Vert \nabla f\left( \frac{\hat{X}_i\mathbf{1}_n}{n} \right)\mathbf{1}^\mathsf{T}_n \right\Vert^2 \left(\rho^{K-1-i}+\frac{2\sqrt{\rho}^{K-1-i}}{1-\sqrt{\rho}}\right)}} \nonumber
\\ &\quad +
12n\gamma^2L^2\sum_{k=0}^{K-1}{\sum_{i=0}^{K-1}{M_i \left(\rho^{K-1-i}+\frac{2\sqrt{\rho}^{K-1-i}}{1-\sqrt{\rho}}\right)}} \nonumber
\\ &\leq 
Kn\rho G_{1/2}^2(K-1)+12K\gamma^2n^2(2L^2+1)G(K-1)+\frac{2Kn\gamma^2\sigma^2}{1-\rho}+\frac{18Kn\gamma^2\varsigma^2}{\left(1-\sqrt{\rho}\right)^2} \nonumber
\\ &\quad +
12\gamma^2\sum_{k=0}^{K-1}{\mathbb{E}\left\Vert \nabla f\left( \frac{\hat{X}_k\mathbf{1}_n}{n} \right)\mathbf{1}^\mathsf{T}_n \right\Vert^2 \left(\sum_{i=0}^{K-1}{\rho^{K-1-i}}+\frac{2\sum_{i=0}^{K-1}{\sqrt{\rho}^{K-1-i}}}{1-\sqrt{\rho}}\right)} \nonumber
\\ &\quad +
12n\gamma^2L^2\sum_{k=0}^{K-1}{M_k \left(\sum_{i=0}^{K-1}{\rho^{K-1-i}}+\frac{2\sum_{i=0}^{K-1}{\sqrt{\rho}^{K-1-i}}}{1-\sqrt{\rho}}\right)} \nonumber
\\ &\leq 
Kn\rho G_{1/2}^2(K-1)+12K\gamma^2n^2(2L^2+1)G(K-1)+\frac{2Kn\gamma^2\sigma^2}{1-\rho}+\frac{18Kn\gamma^2\varsigma^2}{\left(1-\sqrt{\rho}\right)^2} \nonumber
\\ &\quad +
\frac{36\gamma^2}{\left(1-\sqrt{\rho}\right)^2}\sum_{k=0}^{K-1}{\mathbb{E}\left\Vert \nabla f\left( \frac{\hat{X}_k\mathbf{1}_n}{n} \right)\mathbf{1}^\mathsf{T}_n \right\Vert^2}+\frac{36n\gamma^2L^2}{\left(1-\sqrt{\rho}\right)^2}\sum_{k=0}^{K-1}{M_k}
\end{align}

Rearranging the terms, the expression becomes:

\begin{align}
\left(1-\frac{36n\gamma^2L^2}{\left(1-\sqrt{\rho}\right)^2}\right)\sum_{k=0}^{K-1}{M_k} 
&\leq 
Kn\rho G_{1/2}^2(K-1)+12K\gamma^2n^2(2L^2+1)G(K-1)+\frac{2Kn\gamma^2\sigma^2}{1-\rho} \nonumber
\\ &\quad +
\frac{18Kn\gamma^2\varsigma^2}{\left(1-\sqrt{\rho}\right)^2} 
+
\frac{36\gamma^2}{\left(1-\sqrt{\rho}\right)^2}\sum_{k=0}^{K-1}{\mathbb{E}\left\Vert \nabla f\left( \frac{\hat{X}_k\mathbf{1}_n}{n} \right)\mathbf{1}^\mathsf{T}_n \right\Vert^2}
\end{align}

Defining $C_2 = \left(1-\frac{36 n \gamma^2 L^2}{\left(1-\sqrt{\rho}\right)^2}\right)$, we can rewrite it to:

\begin{align}
\label{eqn:sumMk}
\sum_{k=0}^{K-1}{M_k} 
&\leq
C_{2}^{-1}\left(Kn\rho G_{1/2}^2(K-1)
+
12K\gamma^2n^2(2L^2+1)G(K-1)\right)
+
\frac{2Kn\gamma^2\sigma^2}{C_2(1-\rho)} \nonumber
\\ &\quad +
\frac{18Kn\gamma^2\varsigma^2}{C_2 \left(1-\sqrt{\rho}\right)^2}
+
\frac{36\gamma^2}{C_2\left(1-\sqrt{\rho}\right)^2}\sum_{k=0}^{K-1}{\mathbb{E}\left\Vert \nabla f\left( \frac{\hat{X}_k\mathbf{1}_n}{n} \right)\mathbf{1}^\mathsf{T}_n \right\Vert^2}
\end{align}

We also know that $T_1$ is bounded as such:

\begin{align}
T_1 &\leq
\frac{2L^2}{n^2}\left(\sum_{i=1}^{n}{Q_{k,i}}+\sum_{i=1}^{n}{ng(k)}\right) \nonumber
\\ &\leq
\frac{2L^2}{n}\left(M_k+ng(k)\right)
\end{align}

We then input this bound on $T_1$ in~\eqref{lipschitzbound} and obtain:

\begin{align}
&\quad \mathbb{E}f\left(\frac{X_{k+1}\textbf{1}_n}{n}\right) \nonumber \\
&\quad \leq \mathbb{E}f\left(\frac{X_{k}\textbf{1}_n}{n}\right)+\frac{1}{2}\mathbb{E}\left\Vert \nabla f\left(\frac{X_{k}\textbf{1}_n}{n}\right) \right\Vert^2
+\frac{L+1}{2}g(k)+\frac{\gamma^2L\sigma^2}{2n}
- \frac{\gamma}{2}\mathbb{E}\left\Vert\nabla f\left(\frac{\hat{X}_{k}\textbf{1}_n}{n}\right)\right\Vert^2 \nonumber \\
&\quad + \frac{\gamma^2L-\gamma}{2}\mathbb{E}\left\Vert \frac{\partial f(\hat{X}_k)\textbf{1}_n}{n}\right\Vert^2+\frac{\gamma L^2}{n}M_k+\gamma L^2g(k) \nonumber \\
&\quad \leq \mathbb{E}f\left(\frac{X_{k}\textbf{1}_n}{n}\right)+\frac{\gamma L^2}{2}\mathbb{E}\left\Vert \frac{\xi_{k}\textbf{1}_n}{n} \right\Vert^2+\frac{L+1}{2}g(k)+\frac{\gamma^2L\sigma^2}{2n}- \frac{1-\gamma}{2}\mathbb{E}\left\Vert\nabla f\left(\frac{X_{k}\textbf{1}_n}{n}\right)\right\Vert^2 \nonumber \\
&\quad + \frac{\gamma^2L-\gamma}{2}\mathbb{E}\left\Vert \frac{\partial f(\hat{X}_k)\textbf{1}_n}{n}\right\Vert^2+\frac{\gamma L^2}{n}M_k+\gamma L^2g(k) \nonumber \\
&\quad \leq \mathbb{E}f\left(\frac{X_{k}\textbf{1}_n}{n}\right)+\frac{3\gamma L^2+L+1}{2}g(k)+\frac{\gamma^2L\sigma^2}{2n}- \frac{1-\gamma}{2}\mathbb{E}\left\Vert\nabla f\left(\frac{X_{k}\textbf{1}_n}{n}\right)\right\Vert^2 \nonumber \\
&\quad + \frac{\gamma^2L-\gamma}{2}\mathbb{E}\left\Vert \frac{\partial f(\hat{X}_k)\textbf{1}_n}{n}\right\Vert^2+\frac{\gamma L^2}{n}M_k
\end{align}

Summing from $k=0$ to $k=K-1$ on both sides yields:
\begin{equation}
\begin{split}
&\quad \frac{1-\gamma}{2}\sum_{k=0}^{K-1}{\mathbb{E}\left\Vert\nabla f\left(\frac{X_{k}\textbf{1}_n}{n}\right)\right\Vert^2}- \frac{\gamma^2L-\gamma}{2}\sum_{k=0}^{K-1}{\mathbb{E}\left\Vert \frac{\partial f(\hat{X}_k)\textbf{1}_n}{n}\right\Vert^2} \\
&\quad \leq f(0)-f^*+ \frac{K\gamma^2L\sigma^2}{2n}+\frac{\gamma L^2}{n}\sum_{k=0}^{K-1}{M_k}+\frac{3\gamma L^2+L+1}{2}G(K-1) \\
&\quad \stackrel{\eqref{eqn:sumMk}}{\leq} f(0)-f^*+ \frac{K\gamma^2L\sigma^2}{2n}+C_2^{-1}K\rho\gamma L^2G_{1/2}^2(K-1) \\
& \quad +\left(12C_2^{-1}K\gamma^3nL^2(2L^2+1)+\frac{3\gamma L^2+L+1}{2}\right)G(K-1)
+
\frac{2K\gamma^3n\sigma^2L^2}{C_2(1-\rho)} \\
& \quad +
\frac{18Kn\gamma^3\varsigma^2L^2}{C_2\left(1-\sqrt{\rho}\right)^2}+\frac{36\gamma^3L^2}{C_2\left(1-\sqrt{\rho}\right)^2}\sum_{k=0}^{K-1}{\mathbb{E}\left\Vert \nabla f\left( \frac{\hat{X}_k\mathbf{1}_n}{n} \right) \right\Vert^2}\\
\end{split}
\end{equation}

Rearranging the terms, dividing by $K$ and using the Lipschitz inequality  $\left\Vert \nabla f\left( \frac{\hat{X}_k\mathbf{1}_n}{n} \right) \right\Vert^2 \leq 2\left\Vert \nabla f\left( \frac{X_k\mathbf{1}_n}{n} \right)\right\Vert^2 + 2L^2g(k)$ (similar to what we have done in~\eqref{lipschitzbound}), we obtain:
\begin{equation}
\begin{split}
&\quad \frac{1}{K}\left(\frac{1-\gamma}{2}-\frac{72\gamma^3L^2}{C_2\left(1-\sqrt{\rho}\right)^2}\right)\sum_{k=0}^{K-1}{\mathbb{E}\left\Vert\nabla f\left(\frac{X_{k}\textbf{1}_n}{n}\right)\right\Vert^2}+ \frac{\gamma-\gamma^2 L}{2K}\sum_{k=0}^{K-1}{\mathbb{E}\left\Vert \frac{\partial f(\hat{X}_k)\textbf{1}_n}{n}\right\Vert^2}\\
& \quad \leq 
\frac{f(0)-f^*}{K}+ \frac{\gamma^2 L\sigma^2}{2n} \\
& \quad +
\left(12C_2^{-1}\gamma^3nL^2\left(2L^2+1\right)+\frac{3\gamma L^2+L+1}{2K}+\frac{72\gamma^3L^4}{KC_2\left(1-\sqrt{\rho}\right)^2}\right)G(K-1)\\
& \quad +
C_2^{-1}\gamma\rho L^2G_{1/2}^2(K-1)+\frac{2n\gamma^3\sigma^2L^2}{C_2(1-\rho)}+\frac{18n\gamma^3\varsigma^2L^2}{C_2\left(1-\sqrt{\rho}\right)^2}\\
\end{split}
\end{equation}

where $C_1 = \left(\frac{1-\gamma}{2}-\frac{72\gamma^3}{C_2\left(1-\sqrt{\rho}\right)^2}L^2\right)$. This completes the proof.
\end{proof}

\begin{proof}[\textbf{Proof to Corollary~\ref{corr:conv}}]
First we want to remove the term
$\sum_{k=0}^{K-1}{\mathbb{E}\left\Vert \frac{\partial f(\hat{X}_k)\textbf{1}_n}{n}\right\Vert^2}$ in the LHS and maintain the inequality. For that, the coefficient of that term has to satisfy:
\begin{align}
\label{eqn:gammalessL}
&\frac{\gamma-\gamma^2 L}{2K} > 0 \nonumber \\
\implies &\gamma < \frac{1}{L}
\end{align}
Now we have
\begin{equation}
\begin{split}
&\quad \frac{C_1}{K}\sum_{k=0}^{K-1}{\mathbb{E}\left\Vert\nabla f\left(\frac{X_{k}\textbf{1}_n}{n}\right)\right\Vert^2}\\
& \quad \leq \frac{f(0)-f^*}{K}+ \frac{\gamma^2 L\sigma^2}{2n} \\
& \quad +
\left(12C_2^{-1}\gamma^3nL^2\left(2L^2+1\right)+\frac{3\gamma L^2+L+1}{2K}+\frac{72\gamma^3L^4}{KC_2\left(1-\sqrt{\rho}\right)^2}\right)G(K-1)\\
& \quad +C_2^{-1}\gamma\rho L^2G_{1/2}^2(K-1)+\frac{2n\gamma^3\sigma^2L^2}{C_2(1-\rho)}+\frac{18n\gamma^3\varsigma^2L^2}{C_2\left(1-\sqrt{\rho}\right)^2}\\
\end{split}
\end{equation}

We choose $\gamma = \frac{1}{2\rho L^2\sqrt{K}+\sigma\sqrt{K/n}}$. Also $\gamma$ should satisfy $\gamma < 1$. In order to satisfy that as well as~\eqref{eqn:gammalessL}, we enforce
\begin{align}
&\frac{1}{2\rho L^2\sqrt{K}+\sigma\sqrt{K/n}} < \frac{1}{L+1} \nonumber \\
\implies &K > \left( \frac{\sqrt{n}(L+1)}{2 \rho L^2 \sqrt{n} + \sigma} \right)^2
\end{align}

The $\gamma$ satisfies the following as well:
\begin{align}
&\gamma < \frac{1}{\sigma \sqrt{\frac{K}{n}}} \nonumber \\
\implies &\gamma^2 < \frac{n}{\sigma^2 K}
\end{align}
Since $\gamma < 1$, we also have $\gamma^3\leq \frac{n}{\sigma^2K}$.

Now if $K \geq \frac{72 L^2 n^2}{\sigma^2 (1 - \sqrt{\rho})^2}$, we can bound $C_2$ as $C_2 \geq \frac{1}{2}$. Further we can also bound $C_1$ as follows:
\begin{align}
C_1 = &\left(\frac{1-\gamma}{2}-\frac{72\gamma^3}{C_2\left(1-\sqrt{\rho}\right)^2}L^2\right) \nonumber \\
& \geq \left(\frac{1-\gamma}{2}-\frac{144L^2\gamma^3}{\left(1-\sqrt{\rho}\right)^2}\right) \geq \frac{1}{2}
\end{align}

Finally we have:
\begin{equation}
\begin{split}
&\quad \frac{1}{2K}\sum_{k=0}^{K-1}{\mathbb{E}\left\Vert\nabla f\left(\frac{X_{k}\textbf{1}_n}{n}\right)\right\Vert^2}\leq \frac{\left(f(0)-f^*+L/2\right)}{K}\\
& \quad +\left(\frac{\left(1-\sqrt{\rho}\right)^2\left(2L^2+1\right)}{3\left(2\rho L^2\sqrt{K}+\sigma\sqrt{K/n}\right)}+\frac{7L^2+L+1}{2K}\right)G(K-1)\\
& \quad +\frac{2\rho L^2G_{1/2}^2(K-1)}{2\rho L^2\sqrt{K}+\sigma\sqrt{K/n}}+\frac{4nL^2}{\left(2\rho L^2\sqrt{K}+\sigma\sqrt{K/n}\right)^3}\left(\frac{\sigma^2}{\left(1-\rho\right)}+\frac{9\varsigma^2}{\left(1-\sqrt{\rho}\right)^2}\right)\\
& \quad \leq \frac{\left(f(0)-f^*+L/2\right)}{K}+\frac{C_3}{\sqrt{K}}G(K-1)+\frac{C_4}{K}G(K-1)\\
& \quad +\frac{1}{\sqrt{K}}G_{1/2}^2(K-1)+\frac{4n^3L^2}{\sigma^3K\sqrt{Kn}}\left(\frac{\sigma^2}{\left(1-\rho\right)}+\frac{9\varsigma^2}{\left(1-\sqrt{\rho}\right)^2}\right)\\
\end{split}
\end{equation}
where $C_3=\frac{\left(1-\sqrt{\rho}\right)^2\left(2L^2+1\right)}{6\rho L^2}$ and $C_4=\frac{7L^2+L+1}{2}$.

If $K$ is large enough, in particular, if $K\geq \frac{4n^3L^2}{\sigma^3\left(f(0)-f^*+L/2\right)}\left(\frac{\sigma^2}{\left(1-\rho\right)}+\frac{9\varsigma^2}{\left(1-\sqrt{\rho}\right)^2}\right)$, the last term is bounded by $\frac{\left(f(0)-f^*+L/2\right)}{\sqrt{Kn}}$. Thus the final expression is

\begin{equation}
\begin{split}
\quad \frac{1}{K}\sum_{k=0}^{K-1}{\mathbb{E}\left\Vert\nabla f\left(\frac{X_{k}\textbf{1}_n}{n}\right)\right\Vert^2} 
\leq 
&\left(2(f(0)-f^*)+L\right)\left(\frac{1}{K}+\frac{1}{\sqrt{Kn}}\right) \\
&+
\left(\frac{2C_3}{\sqrt{K}}+\frac{2C_4}{K}\right)G(K-1)
+
\frac{2}{\sqrt{K}}G_{1/2}^2(K-1)
\end{split}
\end{equation}
which completes the proof.
\end{proof}

\bibliographystyle{unsrt}
\def\bibfont{\small}
\bibliography{research}

\begin{thebibliography}{10}

\bibitem{gorriz2020artificial}
Juan~M G{\'o}rriz, Javier Ram{\'\i}rez, Andr{\'e}s Ort{\'\i}z, Francisco~J
  Mart{\'\i}nez-Murcia, Fermin Segovia, John Suckling, Matthew Leming, Yu-Dong
  Zhang, Jose~Ram{\'o}n {\'A}lvarez-S{\'a}nchez, Guido Bologna, et~al.
\newblock Artificial intelligence within the interplay between natural and
  artificial computation: Advances in data science, trends and applications.
\newblock {\em Neurocomputing}, 410:237--270, 2020.

\bibitem{upadhyaya2013parallel}
Sujatha~R Upadhyaya.
\newblock Parallel approaches to machine learning—a comprehensive survey.
\newblock {\em Journal of Parallel and Distributed Computing}, 73(3):284--292,
  2013.

\bibitem{ben2019demystifying}
Tal Ben-Nun and Torsten Hoefler.
\newblock Demystifying parallel and distributed deep learning: An in-depth
  concurrency analysis.
\newblock {\em ACM Computing Surveys (CSUR)}, 52(4):1--43, 2019.

\bibitem{young2017evolving}
Steven~R Young, Derek~C Rose, Travis Johnston, William~T Heller, Thomas~P
  Karnowski, Thomas~E Potok, Robert~M Patton, Gabriel Perdue, and Jonathan
  Miller.
\newblock Evolving deep networks using hpc.
\newblock In {\em Proceedings of the Machine Learning on HPC Environments},
  pages 1--7, 2017.

\bibitem{yin2019strategies}
Junqi Yin, Shubhankar Gahlot, Nouamane Laanait, Ketan Maheshwari, Jack
  Morrison, Sajal Dash, and Mallikarjun Shankar.
\newblock Strategies to deploy and scale deep learning on the summit
  supercomputer.
\newblock In {\em 2019 IEEE/ACM Third Workshop on Deep Learning on
  Supercomputers (DLS)}, pages 84--94. IEEE, 2019.

\bibitem{bekkerman2011scaling}
Ron Bekkerman, Mikhail Bilenko, and John Langford.
\newblock {\em Scaling up machine learning: Parallel and distributed
  approaches}.
\newblock Cambridge University Press, 2011.

\bibitem{boulemtafes2020review}
Amine Boulemtafes, Abdelouahid Derhab, and Yacine Challal.
\newblock A review of privacy-preserving techniques for deep learning.
\newblock {\em Neurocomputing}, 384:21--45, 2020.

\bibitem{bergman2008}
Keren Bergman et~al.
\newblock Exascale computing study: Technology challenges in achieving exascale
  systems.
\newblock {\em Defense Advanced Research Projects Agency Information Processing
  Techniques Office (DARPA IPTO), Tech. Rep}, 15, 2008.

\bibitem{lucas2014doe}
Robert Lucas, James Ang, Keren Bergman, Shekhar Borkar, William Carlson, Laura
  Carrington, George Chiu, Robert Colwell, William Dally, Jack Dongarra, et~al.
\newblock Doe advanced scientific computing advisory subcommittee (ascac)
  report: top ten exascale research challenges.
\newblock Technical report, USDOE Office of Science (SC)(United States), 2014.

\bibitem{jana2014power}
Siddhartha Jana, Oscar Hernandez, Stephen Poole, and Barbara Chapman.
\newblock Power consumption due to data movement in distributed programming
  models.
\newblock In {\em European Conference on Parallel Processing}, pages 366--378.
  Springer, 2014.

\bibitem{zhang2013communication}
Yuchen Zhang, John~C Duchi, and Martin~J Wainwright.
\newblock Communication-efficient algorithms for statistical optimization.
\newblock {\em The Journal of Machine Learning Research}, 14(1):3321--3363,
  2013.

\bibitem{alistarh2016qsgd}
Dan Alistarh, Demjan Grubic, Jerry Li, Ryota Tomioka, and Milan Vojnovic.
\newblock Qsgd: Communication-efficient sgd via gradient quantization and
  encoding.
\newblock {\em arXiv preprint arXiv:1610.02132}, 2016.

\bibitem{lin2018deep}
Yujun Lin, Song Han, Huizi Mao, Yu~Wang, and Bill Dally.
\newblock Deep gradient compression: Reducing the communication bandwidth for
  distributed training.
\newblock In {\em International Conference on Learning Representations}, 2018.

\bibitem{seide20141-bit}
Frank Seide, Hao Fu, Jasha Droppo, Gang Li, and Dong Yu.
\newblock 1-bit stochastic gradient descent and its application to
  data-parallel distributed training of speech dnns.
\newblock In {\em Fifteenth Annual Conference of the International Speech
  Communication Association}, 2014.

\bibitem{liu2021consensus}
Bo~Liu and Zhengtao Ding.
\newblock A consensus-based decentralized training algorithm for deep neural
  networks with communication compression.
\newblock {\em Neurocomputing}, 2021.

\bibitem{chronopoulos1989}
AT~Chronopoulos and Charles~William Gear.
\newblock s-step iterative methods for symmetric linear systems.
\newblock {\em Journal of Computational and Applied Mathematics},
  25(2):153--168, 1989.

\bibitem{hoemmen2010}
Mark Hoemmen.
\newblock {\em Communication-avoiding Krylov subspace methods}.
\newblock PhD thesis, UC Berkeley, 2010.

\bibitem{ghosh2018event}
Soumyadip Ghosh, Kamal~K Saha, Vijay Gupta, and Gretar Tryggvason.
\newblock Event-triggered communication in parallel computing.
\newblock In {\em 2018 IEEE/ACM 9th Workshop on Latest Advances in Scalable
  Algorithms for Large-Scale Systems (scalA)}, pages 1--8. IEEE, 2018b.

\bibitem{code_url}
\url{https://github.com/soumyadipghosh/eventgrad/tree/master/dcifar10}.

\bibitem{ghosh2020eventgrad}
Soumyadip Ghosh and Vijay Gupta.
\newblock Eventgrad: Event-triggered communication in parallel stochastic
  gradient descent.
\newblock In {\em 2020 IEEE/ACM Workshop on Machine Learning in High
  Performance Computing Environments (MLHPC) and Workshop on Artificial
  Intelligence and Machine Learning for Scientific Applications (AI4S)}, pages
  1--8. IEEE, 2020.

\bibitem{zinkevich2010parallelized}
Martin Zinkevich, Markus Weimer, Lihong Li, and Alex~J Smola.
\newblock Parallelized stochastic gradient descent.
\newblock In {\em Advances in neural information processing systems}, pages
  2595--2603, 2010.

\bibitem{bottou2018optimization}
L{\'e}on Bottou, Frank~E Curtis, and Jorge Nocedal.
\newblock Optimization methods for large-scale machine learning.
\newblock {\em Siam Review}, 60(2):223--311, 2018.

\bibitem{recht2011hogwild}
Benjamin Recht, Christopher Re, Stephen Wright, and Feng Niu.
\newblock Hogwild: A lock-free approach to parallelizing stochastic gradient
  descent.
\newblock In {\em Advances in neural information processing systems}, pages
  693--701, 2011.

\bibitem{zhang2015deep}
Sixin Zhang, Anna~E Choromanska, and Yann LeCun.
\newblock Deep learning with elastic averaging sgd.
\newblock In {\em Advances in neural information processing systems}, pages
  685--693, 2015.

\bibitem{lian2015asynchronous}
Xiangru Lian, Yijun Huang, Yuncheng Li, and Ji~Liu.
\newblock Asynchronous parallel stochastic gradient for nonconvex optimization.
\newblock In {\em Advances in Neural Information Processing Systems}, pages
  2737--2745, 2015.

\bibitem{sattler2019robust}
Felix Sattler, Simon Wiedemann, Klaus-Robert M{\"u}ller, and Wojciech Samek.
\newblock Robust and communication-efficient federated learning from non-iid
  data.
\newblock {\em IEEE transactions on neural networks and learning systems},
  2019.

\bibitem{chen2019communication}
Yang Chen, Xiaoyan Sun, and Yaochu Jin.
\newblock Communication-efficient federated deep learning with layerwise
  asynchronous model update and temporally weighted aggregation.
\newblock {\em IEEE Transactions on Neural Networks and Learning Systems},
  2019.

\bibitem{xu2020ternary}
Jinjin Xu, Wenli Du, Ran Cheng, Wangli He, and Yaochu Jin.
\newblock Ternary compression for communication-efficient federated learning.
\newblock {\em arXiv preprint arXiv:2003.03564}, 2020.

\bibitem{gropp1999using}
William~D Gropp, William Gropp, Ewing Lusk, and Anthony Skjellum.
\newblock {\em Using MPI: portable parallel programming with the
  message-passing interface}, volume~1.
\newblock MIT press, 1999.

\bibitem{strom2015scalable}
Nikko Strom.
\newblock Scalable distributed dnn training using commodity gpu cloud
  computing.
\newblock In {\em Sixteenth Annual Conference of the International Speech
  Communication Association}, 2015.

\bibitem{dryden2016communication}
Nikoli Dryden, Tim Moon, Sam~Ade Jacobs, and Brian Van~Essen.
\newblock Communication quantization for data-parallel training of deep neural
  networks.
\newblock In {\em 2016 2nd Workshop on Machine Learning in HPC Environments
  (MLHPC)}, pages 1--8. IEEE, 2016.

\bibitem{gupta2015deep}
Suyog Gupta, Ankur Agrawal, Kailash Gopalakrishnan, and Pritish Narayanan.
\newblock Deep learning with limited numerical precision.
\newblock In {\em International Conference on Machine Learning}, pages
  1737--1746, 2015.

\bibitem{micikevicius2018mixed}
Paulius Micikevicius, Sharan Narang, Jonah Alben, Gregory Diamos, Erich Elsen,
  David Garcia, Boris Ginsburg, Michael Houston, Oleksii Kuchaiev, Ganesh
  Venkatesh, et~al.
\newblock Mixed precision training.
\newblock In {\em International Conference on Learning Representations}, 2018.

\bibitem{alistarh2018convergence}
Dan Alistarh, Torsten Hoefler, Mikael Johansson, Nikola Konstantinov, Sarit
  Khirirat, and C{\'e}dric Renggli.
\newblock The convergence of sparsified gradient methods.
\newblock In {\em Advances in Neural Information Processing Systems}, pages
  5973--5983, 2018.

\bibitem{renggli2019sparcml}
C{\`e}dric Renggli, Saleh Ashkboos, Mehdi Aghagolzadeh, Dan Alistarh, and
  Torsten Hoefler.
\newblock Sparcml: High-performance sparse communication for machine learning.
\newblock In {\em Proceedings of the International Conference for High
  Performance Computing, Networking, Storage and Analysis}, pages 1--15, 2019.

\bibitem{basu2019qsparse}
Debraj Basu, Deepesh Data, Can Karakus, and Suhas Diggavi.
\newblock Qsparse-local-sgd: Distributed sgd with quantization, sparsification
  and local computations.
\newblock In {\em Advances in Neural Information Processing Systems}, pages
  14695--14706, 2019.

\bibitem{yuan2016convergence}
Kun Yuan, Qing Ling, and Wotao Yin.
\newblock On the convergence of decentralized gradient descent.
\newblock {\em SIAM Journal on Optimization}, 26(3):1835--1854, 2016.

\bibitem{olfati2007consensus}
Reza Olfati-Saber, J~Alex Fax, and Richard~M Murray.
\newblock Consensus and cooperation in networked multi-agent systems.
\newblock {\em Proceedings of the IEEE}, 95(1):215--233, 2007.

\bibitem{li2018pipe}
Youjie Li, Mingchao Yu, Songze Li, Salman Avestimehr, Nam~Sung Kim, and
  Alexander Schwing.
\newblock Pipe-sgd: A decentralized pipelined sgd framework for distributed
  deep net training.
\newblock {\em arXiv preprint arXiv:1811.03619}, 2018.

\bibitem{lian2017can}
Xiangru Lian, Ce~Zhang, Huan Zhang, Cho-Jui Hsieh, Wei Zhang, and Ji~Liu.
\newblock Can decentralized algorithms outperform centralized algorithms? a
  case study for decentralized parallel stochastic gradient descent.
\newblock In {\em Advances in Neural Information Processing Systems}, pages
  5330--5340, 2017.

\bibitem{blot2016gossip}
Michael Blot, David Picard, Matthieu Cord, and Nicolas Thome.
\newblock Gossip training for deep learning.
\newblock {\em arXiv preprint arXiv:1611.09726}, 2016.

\bibitem{jin2016scale}
Peter~H Jin, Qiaochu Yuan, Forrest Iandola, and Kurt Keutzer.
\newblock How to scale distributed deep learning?
\newblock {\em arXiv preprint arXiv:1611.04581}, 2016.

\bibitem{daily2018gossipgrad}
Jeff Daily, Abhinav Vishnu, Charles Siegel, Thomas Warfel, and Vinay Amatya.
\newblock Gossipgrad: Scalable deep learning using gossip communication based
  asynchronous gradient descent.
\newblock {\em arXiv preprint arXiv:1803.05880}, 2018.

\bibitem{lemmon2010event}
Michael Lemmon.
\newblock Event-triggered feedback in control, estimation, and optimization.
\newblock In {\em Networked control systems}, pages 293--358. Springer, 2010.

\bibitem{dimarogonas2012distributed}
Dimos~V Dimarogonas, Emilio Frazzoli, and Karl~H Johansson.
\newblock Distributed event-triggered control for multi-agent systems.
\newblock {\em IEEE Transactions on Automatic Control}, 57(5):1291--1297, 2012.

\bibitem{wang2019distributed}
Aijuan Wang, Tao Dong, and Xiaofeng Liao.
\newblock Distributed optimal consensus algorithms in multi-agent systems.
\newblock {\em Neurocomputing}, 339:26--35, 2019.

\bibitem{chen2017event}
Xin Chen, Xiaofeng Liao, Lan Gao, Shasha Yang, Huiwei Wang, and Huaqing Li.
\newblock Event-triggered consensus for multi-agent networks with switching
  topology under quantized communication.
\newblock {\em Neurocomputing}, 230:294--301, 2017.

\bibitem{nowzari2019event}
Cameron Nowzari, Eloy Garcia, and Jorge Cort{\'e}s.
\newblock Event-triggered communication and control of networked systems for
  multi-agent consensus.
\newblock {\em Automatica}, 105:1--27, 2019.

\bibitem{zhao2018distributed}
Zhongyuan Zhao, Gang Chen, and Mingxiang Dai.
\newblock Distributed event-triggered scheme for a convex optimization problem
  in multi-agent systems.
\newblock {\em Neurocomputing}, 284:90--98, 2018.

\bibitem{lu2017distributed}
Qingguo L{\"u}, Huaqing Li, and Dawen Xia.
\newblock Distributed optimization of first-order discrete-time multi-agent
  systems with event-triggered communication.
\newblock {\em Neurocomputing}, 235:255--263, 2017.

\bibitem{richert2016distributed}
Dean Richert and Jorge Cortes.
\newblock Distributed linear programming with event-triggered communication.
\newblock {\em SIAM Journal on Control and Optimization}, 54(3):1769--1797,
  2016.

\bibitem{zhang2016adaptive}
Jin Zhang, Chen Peng, Dajun Du, and Min Zheng.
\newblock Adaptive event-triggered communication scheme for networked control
  systems with randomly occurring nonlinearities and uncertainties.
\newblock {\em Neurocomputing}, 174:475--482, 2016.

\bibitem{george2019distributed}
Jemin George and Prudhvi Gurram.
\newblock Distributed deep learning with event-triggered communication.
\newblock {\em arXiv preprint arXiv:1909.05020}, 2019.

\bibitem{boyd2004fastest}
Stephen Boyd, Persi Diaconis, and Lin Xiao.
\newblock Fastest mixing markov chain on a graph.
\newblock {\em SIAM review}, 46(4):667--689, 2004.

\bibitem{seyboth2013event}
Georg~S Seyboth, Dimos~V Dimarogonas, and Karl~H Johansson.
\newblock Event-based broadcasting for multi-agent average consensus.
\newblock {\em Automatica}, 49(1):245--252, 2013.

\bibitem{gropp2014using}
William Gropp, Torsten Hoefler, Rajeev Thakur, and Ewing Lusk.
\newblock {\em Using advanced MPI: Modern features of the message-passing
  interface}.
\newblock MIT Press, 2014.

\bibitem{he2016deep}
Kaiming He, Xiangyu Zhang, Shaoqing Ren, and Jian Sun.
\newblock Deep residual learning for image recognition.
\newblock In {\em Proceedings of the IEEE conference on computer vision and
  pattern recognition}, pages 770--778, 2016.

\end{thebibliography}
\end{document}